%% file: moead-tight-communication-bounds.tex
\documentclass{article}

\usepackage[preprint,nonatbib]{neurips_2026}

\usepackage[utf8]{inputenc} 
\usepackage[T1]{fontenc}    
\usepackage{hyperref}       
\usepackage{url}            
\usepackage{booktabs}       
\usepackage{amsfonts}       
\usepackage{nicefrac}       
\usepackage{microtype}      
\usepackage{xcolor}         

\usepackage{amsmath, amsthm, amsxtra, amssymb, amstext, mathtools}
\usepackage{thmtools} 
\usepackage{xspace}
\usepackage{enumitem}
\usepackage[algo2e,ruled,vlined,linesnumbered]{algorithm2e}\SetArgSty{upshape}
\usepackage{thm-restate}
\usepackage[nameinlink]{cleveref}

\usepackage[shortcuts]{extdash} 

\newcommand{\mylineref}[1]{\hyperref[#1]{line~\ref*{#1}}}

\makeatletter
\@ifundefined{newcounteralias}{}{%
  \renewcommand\thmt@autorefsetup{%
    \@xa\def\csname\thmt@envname autorefname\@xa\endcsname
      \@xa{\thmt@thmname}%
  }%
}
\makeatother

\newtheorem{theorem}{Theorem}

\newtheorem{lemma}[theorem]{Lemma}
\newtheorem{corollary}[theorem]{Corollary}

\newcommand{\oea}{\mbox{${(1 + 1)}$~EA}\xspace}

\newcommand{\ooea}{\oea}

\DeclareRobustCommand{\NSGAtwo}{NSGA\nobreakdash-II\xspace}
\DeclareRobustCommand{\NSGAthree}{NSGA\nobreakdash-III\xspace}
\DeclareRobustCommand{\SMS}{SMS\nobreakdash-EMOA\xspace}
\newcommand{\SPEA}{SPEA2\xspace}

\newcommand{\omm}{\textsc{OMM}\xspace}

\newcommand{\lotz}{\textsc{LOTZ}\xspace}

\newcommand{\R}{\ensuremath{\mathbb{R}}}

\newcommand{\N}{\ensuremath{\mathbb{N}}} 

\let\originalleft\left
\let\originalright\right
\renewcommand{\left}{\mathopen{}\mathclose\bgroup\originalleft}
\renewcommand{\right}{\aftergroup\egroup\originalright}

\definecolor{orange}{RGB}{255,127,0}

\DeclarePairedDelimiter\ceil{\lceil}{\rceil}
\newcommand{\abs}[1]{\left| #1\right|\xspace}

\title{Understanding the Co-evolutionary Solution Exchange of the MOEA/D: Already the Smallest Communication Radius Gives the Best-Possible Speed-Up
}
\author{
  Benjamin Doerr$^1$, Moaad Elmoutassim$^2$, Martin~S. Krejca$^1$, Guy Mushiata Kibulu$^2$\\
  $^1$ Laboratoire d'Informatique (LIX), CNRS, École Polytechnique,\\ Institut Polytechnique de Paris, Palaiseau, France\\
  $^2$ École Polytechnique, Institut Polytechnique de Paris, Palaiseau, France\\
  \texttt{firstname.all-lastnames@polytechnique.edu}
}

\begin{document}

\maketitle

\begin{abstract}
  Besides domination-based algorithms such as the \NSGAtwo, the decomposition-based MOEA/D is one of the most popular and successful optimization heuristic for multi-objective problems. It decomposes the problem into several, single-objective subproblems. By solving these in a co-evolutionary manner and archiving non-dominated solutions, it approximates or computes the Pareto front. Interestingly, so far no advantages from the co-evolutionary communication of subproblem solutions were proven. In this first mathematical runtime analysis addressing this research question, we investigate how the mutation-based MOEA/D solves the classic \omm benchmark when solutions are exchanged among subproblems in different distances~$r$. We show, for the first time, that the MOEA/D profits from an exchange of solutions, and that already very small communication radii $r$ are sufficient to maximally profit from this feature of the algorithm. More precisely, for all values $r \ge 1$ we prove a tight runtime guarantee of $\Theta(n^2)$, where $n$ is the problem size. This is asymptotically faster than the known bound of $O(n^2 \log n)$ guarantee for $r=0$, which we complement with a matching bound of $\Omega(n^2 \log n)$.
\end{abstract}

\section{Introduction}

Many real-world problems require the optimization of multiple competing objectives.
This naturally leads to various, mutually incomparable, optimal trade-offs, known as \emph{Pareto optima}, where improving one objective necessarily reduces the quality of another objective.
Thus, the overall optimization goal is to quickly find as many Pareto optima as possible.
To this end, multi-objective evolutionary algorithms (MOEAs) are widely and very successfully applied~\cite{ZhouQLZSZ11,CoelloLV07}.
Their population-based approach of maintaining a set of currently best-so-far solutions lends itself well to the multi-objective setting, and their general-purpose nature allows for great flexibility.

State-of-the-art MOEAs can be classified into two main approaches. Using Pareto dominance as main selection criterion and then a secondary criterion ensuring a balanced coverage of the Pareto front is the strategy
of the \NSGAtwo~\cite{DebPAM02}---the most popular MOEA, with over $60{,}000$ citations on Google Scholar.
Other popular MOEAs using this general strategy include the \NSGAthree~\cite{DebJ14}, \SMS~\cite{BeumeNE07}, and \SPEA~\cite{ZitzlerLT01}. All these algorithms are intensively used in practice. Since more recently, they are also subject to intensive theoretical research, see \cite{ZhengD23aij,WiethegerD23,BianZLQ23,RenBLQ24} and follow-up works.

An alternative and more recent strategy is to consider a decomposition of the objective function~$f$ into several less complex subproblems, and obtain a good set of solutions for $f$ by optimizing these.
The most popular MOEA making use of this approach is the \emph{decomposition-based MOEA} (MOEA/D)~\cite{ZhangL07}, which decomposes~$f$ into a user-specified number of single-objective subproblems.
The MOEA/D proceeds by optimizing these subproblems instead of~$f$ directly.
However, the best-so-far solutions of each subproblem are stored in a shared archive that stores all non-dominated solutions according to the original objective function~$f$, thus ensuring that high-quality solutions for~$f$ are obtained.
The subproblems themselves are optimized in a co-evolutionary fashion, where each subproblem exchanges its best-so-far solution with all subproblems of a user-specified \emph{communication radius}~$r$.
The intention of~$r$ is to benefit from potential positive correlations among the different subproblems.

The MOEA/D has been subject to rigorous mathematical investigation for considerably longer than dominance-based MOEAs~\cite{LiZZZ16}, however, the theoretical understanding of this algorithm seems less developed (see \Cref{sec:relatedWork} for a more detailed discussion). In particular, we have to this day no theoretical results on the necessity of or benefits from the co-evolutionary solution exchange in the MOEA/D.
All theoretical results so far conduct their analysis as if there was no exchange of solutions. Hence this central aspect of the MOEA/D, distinguishing it from a simple parallel optimization of the subproblems, is not at all understood via theoretical means by now.

\paragraph{Our Contribution.}
We prove tight bounds for how the expected runtime of the MOEA/D depends on the choice of its communication radius~$r$.
As our main result, we show that already a radius $r = 1$ leads to asymptotic speed-ups when compared to a trivial communication radius of $r = 0$.
In particular, as we discuss at the end of \Cref{sec:relatedWork}, the guarantees that we obtain with non-trivial communication are better than that of various state-of-the-art MOEAs based on non-dominated sorting, when using a comparable parametrization. Interestingly, we obtain the same runtime guarantees for all $r \ge 1$, that is, increasing $r$ further does not lead to further speed-ups when waiting for the entire Pareto front to be witnessed.
However, for achieving partial progress, we show that the runtime scales linearly in~$\frac{1}{\sqrt{r}}$, the details of which we discuss at the beginning of \Cref{sec:runtimeAnalysis}.

In more detail, we study how the mutation-only MOEA/D optimizes the bi-objective OneMinMax (\omm) benchmark of size~$n$, when using the Chebyshev decomposition into $n + 1$ subproblems with uniform weights.
Hence, each Pareto optimum is represented by the optimum of exactly one subproblem. We further assume that the reference point is initially set to the ideal value of $(n,n)$. This is a standard setting for runtime results for the MOEA/D.

In our main result (\Cref{thm:moeadOmmQuadraticRuntime}), we show that if the MOEA/D uses a communication radius of $r \geq 1$, it exhibits an expected runtime of $\Theta(n^2)$ function evaluations until its archive witnesses the entire Pareto front of \omm for the first time.
This result improves the previously best known bound of $O(n^2 \log n)$~\cite{LiZZZ16} by a factor of~$O(\log n)$, and our bound is asymptotically tight.
Our result follows from a careful analysis of the exchange of solutions among subproblems.
We show that an extremal subproblem propagates its solution quickly to sufficiently many other subproblems.
In turn, this increases the probability to pick a suitable subproblem for finding an improving solution.
The amount of how many subproblems can actually benefit from such an exchange is limited by the structure of \omm, which is why the expected runtime is the same for all communication radii greater or equal to~$1$.
However, the benefit of larger search radii is observable to some extent, before this capping occurs.

We prove \Cref{thm:moeadOmmQuadraticRuntime} for both a uniform and cyclical way of selecting which subproblem to choose next in an iteration.
This shows that the exact choice in which communication occurs is not relevant.
Moreover, we note that we prove the upper and lower quadratic bound separately (in \Cref{thm:moeadQuadraticUpperBound,thm:moeadQuadraticLowerBound}), for more general types of mutation, of which standard bitwise mutation---the most widely applied type of mutation---is a special case (\Cref{lem:bitwiseMutationBothTypes}).

We complement our quadratic bound in \Cref{cor:moeadOmmNoCommunicationSuperQuadraticRuntime} with a bound of $\Theta(n^2 \log n)$ for a trivial communication radius of $r = 0$.
This result follows directly by combining known results in the literature for similar processes.
It shows that the speed-up for $r \geq 1$ is, indeed, a result of the communication radius, since all other parameters remain the same as in the setting for $r \geq 1$.

\paragraph{Outline.}
We discuss related work in \Cref{sec:relatedWork}.
In \Cref{sec:preliminaries}, we introduce our mathematical notation and terminology, and we present the MOEA/D.
In \Cref{sec:runtimeAnalysis}, we discuss and prove our tight $\Theta(n^2)$ runtime bound for non-trivial communication radii.
We complement this result in \Cref{sec:trivialCommunicationRadius} with a $\Theta(n^2 \log n)$ runtime bound for trivial radii.
Last, we conclude our paper in \Cref{sec:conclusion}.

\section{Related Work}
\label{sec:relatedWork}

The MOEA/D has been studied mathematically from several angles so far.
However, none of these results studied the impact of the communication radius on the runtime.
In fact, none of the papers take the communication among subproblems into account when conducting a runtime analysis.

The first runtime analysis of the MOEA/D was conducted by Li et~al.~\cite{LiZZZ16}.
Besides other results, the authors prove an $O(n^2 \log n)$ bound for the expected runtime on the \omm benchmark\footnote{The authors erroneously refer to this benchmark as COCZ in their paper.} in the same setting as in this article.

Huang and Zhou~\cite{HuangZ20} study the MOEA/D with contiguous hypermutation, which is a non-standard mutation operator from artificial immune systems that performs larger changes more frequently than standard operators.
The authors study a general decomposition of the objective functions into~$N$ subproblems, where~$N$ is a user-specified value.
Important for the comparison to our work, Huang and Zhou~\cite{HuangZ20} prove that the MOEA/D with contiguous hypermutation optimizes the \omm benchmark in $O(N n^2 \log n)$ expected time.
Hence, even for a minimal value of $N = 2$, their upper bound only matches the upper bound of Li et~al.~\cite{LiZZZ16} for the classic MOEA/D.

Huang et~al.~\cite{HuangZCHLX21} consider the MOEA/D with standard bitwise mutation and with the slightly problem-specific one-point crossover, also in the setting of a general decomposition into~$N$ subproblems.
Relevant to our article, the authors prove an $O(N n \log n)$ bound for the \omm benchmark.
For the minimal choice of $N = 2$, this results in an $O(n \log n)$ bound, which is faster than the traditional $O(n^2 \log n)$ by Li et~al.~\cite{LiZZZ16} by a factor of~$O(n)$.
The reason for this speed-up is that the crossover operator allows to easily find all Pareto optima once the extremal solutions of the Pareto front have been found.
While impressive, this work highlights the benefit of crossover, and it does not focus on communication among the solutions.

Similar to the result by Huang et~al.~\cite{HuangZCHLX21}, Doerr, Krejca, and Weeks~\cite{DoerrKW24} also study the MOEA/D with a general decomposition into $N + 1$ subproblems, studying only the \omm benchmark.
The authors analyze the impact of mutation, studying both standard bitwise mutation and fast mutation, the latter of which performs changes following a power law with exponent $\beta > 1$.
They prove for the version with bitwise mutation an expected $O(n N \log n + n^{n/(2N)} N \log n)$ runtime\footnote{The authors complement a specific part of this runtime with a loose lower bound that is also exponential in $n / N$.} and for the version with fast mutation an $O(n N \log n + n^\beta \log n)$ expected runtime.
As with the previous results, this result highlights how different operators can lead to a far higher tolerance with respect to the number of problem decompositions, in particular making the setting $N = 1$ feasible.
Nonetheless, the communication radius is not discussed.

Huang et~al.~\cite{HuangZLL21} study the mutation-only MOEA/D for problems with $m \geq 2$ objectives.
They are interested in the first time that the algorithm finds any optimal solution, which is very different from the standard setting of witnessing the entire Pareto front at once.
The authors consider three different decompositions, into~$m$ subproblems each, studying the performance of the MOEA/D on the $m$\lotz and $m$COCZ benchmarks.
The main result of the paper is that the weighted-sum decomposition performs drastically outperforms the other two decomposition methods.
This paper explicitly sets the communication radius to the lowest-possible value.

Zhang et~al.~\cite{ZhangZZD26} also consider the mutation-only MOEA/D for a flexible number~$N$ of subproblems.
Similar to Huang et~al.~\cite{HuangZLL21}, the authors study the impact of the decomposition type, but they consider the standard runtime definition, which is the time until the archive witnesses the entire Pareto front.
Most important to our work, Zhang et~al.~\cite{ZhangZZD26} show that with a logarithmic number of subproblems, the MOEA/D with the weighted-sum decomposition has an expected runtime of $O(n \log^2 n)$ on the \omm benchmark, which is considerably faster than the typical $O(n^2 \log n)$ bound.
Again, this highlights the importance of the decomposition choice, but the communication radius is not studied. The latest work studying the MOEA/D on classic benchmarks~\cite{DoerrKW26} proves that a self-adjusting way to adapt the decomposition can lead to drastic runtime gains for unevenly scaled Pareto fronts.

In the only work on a classic optimization problem, Do et~al.~\cite{DoNNS23} study how the MOEA/D solves multi-objective minimum-weight-base problems.
The authors introduce a special decomposition for their problem and prove that it finds the entire Pareto front in an expected time that is fixed-parameter tractable.

Besides the MOEA/D, other MOEAs have been theoretically studied.
Most relevant to our work, Doerr and Qu~\cite{DoerrQ23LB} showed that the \NSGAtwo has a lower bound of $\Omega(N n \log n)$ on \omm, where $N \geq 4(n + 1)$ denotes the population size of the algorithm.
Thus, for an optimal choice of~$N$, the algorithm has an $\Omega(n^2 \log n)$ lower bound, which is tight~\cite{ZhengLD22}.
The same bound of $\Theta(n^2 \log n)$ also holds for the global SEMO, another MOEA that is commonly studied theoretically~\cite{BossekS24}.
We note that while for other algorithms based on non-dominated sorting, when relying solely on mutation, only a comparable $O(n^2 \log n)$ for best-possible parameter values is known~\cite{WiethegerD24arxiv,WiethegerD24,ZhengLD22,RenBLQ24,WiethegerD23,ZhengD24}, even for variants that are more robust in their parameter settings~\cite{DoerrIK25,DoerrKS26}, there is little reason to believe that these algorithms can optimize \omm faster than $\Omega(n^2 \log n)$ when only using mutation.

Outside the theory domain, some suggestions on the optimal size of communication radius exists, but due to their experimental nature, they are less clear than a typical theoretical result. In the work \cite{ZhangL07} proposing the MOEA/D, generally the MOEA/D was perceived as not very sensitive to the size of the neighborhood used for the exchange of solutions. However, the authors observed that for a knapsack instance, a large radius was detrimental, whereas for the ZDT1 benchmark, a large radius was preferable. A more detailed study on the influence of the communication radius~\cite{IshibuchiAN13} finds that it depends on the number of objectives, but that generally a small constant fraction of the population size such as $10$ to $20$ percent gives good results. We are not aware of more detailed results, in particular, we have found none that discuss whether the coevolutionary solution exchange is advantageous at all.

\section{Preliminaries}
\label{sec:preliminaries}

Let~$\N$ denote the set of natural numbers, including~$0$, and let~$\R$ be the set of reals.
For all $a, b \in \R$, let $[a .. b] = [a, b] \cap \N$.
For all $m \in \N_{\geq 1}$, $v \in \R^m$, and $i \in [1 .. m]$, we denote the value of~$v$ at component~$i$ by~$v_i$.

We investigate \emph{pseudo-Boolean} bi-objective problems, that is, for a given $n \in \N_{\geq 1}$, we consider the \emph{maximization} of \emph{objective functions} $f\colon \{0, 1\}^n \to \R^2$.
We call each $x \in \{0, 1\}^n$ an \emph{individual} and $f(x)$ its \emph{objective value}.
We refer to the value of~$x$ in the first and second objective, respectively, by~$f_1(x)$ and~$f_2(x)$.
Last, we let~$|x|_0$ and~$|x|_1$ denote, respectively, the number of~$0$s and~$1$s of~$x$.

When we mention objective functions, we assume that the dimension $n \in \N_{\geq 1}$ is implicitly given, and when we use big-O notation, we consider asymptotics in this~$n$.

\subsubsection*{Multi-Objective Terminology}

We compare objective values via the \emph{weak dominance} preorder~$\succeq$.
For all $u, v \in \R^2$, we say that~\emph{$v$ weakly dominates~$u$}, written $v \succeq u$ if and only if $v_1 \geq u_1$ and $v_2 \geq u_2$.
In all other cases, we say that~\emph{$u$ and~$v$ are incomparable}.

Given an objective function~$f$ of dimension $n \in \N_{\geq 1}$, we use the dominance terminology also when comparing individuals, referring to their objective values.
We call each $x \in \{0, 1\}^n$ \emph{Pareto-optimal} if and only if for all $y \in \{0, 1\}^n$, we have that $x \succeq y$ or that~$x$ and~$y$ are incomparable.
If~$x$ is Pareto-optimal, we call~$f(x)$ a \emph{Pareto optimum}.
The set of all Pareto optima of~$f$ is its \emph{Pareto front}.

\subsubsection*{The OneMinMax Benchmark}

The OneMinMax (\omm) benchmark \cite{GielL10} is the most widely studied objective function for multi-objective evolutionary algorithms, and it is typically the first function that is considered when conducting a new type of analysis, as we are doing.

As the name suggests, \omm aims at both minimizing and maximizing the number of~$1$s.
Formally, $\omm\colon \{0, 1\}^n \to [0 .. n]^2$ is defined as $x \mapsto (|x|_0, |x|_1)$.
Note that each individual is Pareto-optimal, and the Pareto front is $\{(k, n - k) \mid k \in [0 .. n]\}$, that is, it has $n + 1$ objective values.

\subsubsection*{The MOEA/D Algorithm}

The \emph{decomposition-based multi-objective evolutionary algorithm} (MOEA/D, \Cref{alg:moead}) \cite{ZhangL07} decomposes a given objective function~$f$ via a user-defined number $N \in \N_{\geq 1}$ into $N + 1$ single-objective optimization problems $(g_k)_{k \in [0 .. N]}$.
Each subproblem is subject to \emph{minimization}.
Moreover, for each subproblem $k \in [0 .. N]$, a single best-so-far individual~$x^{(k)}$ is maintained, each of which is initialized uniformly at random.
Last, the MOEA/D maintains an archive~$P$ of best-so-far individuals.

Iteratively, the MOEA/D selects in a user-defined manner a subproblem $k \in [0 .. N]$ and attempts to improve~$x^{(k)}$ by creating a new individual~$y$, called \emph{offspring}.
If~$g_k(y)$ is at least as good as~$g_k(x^{(k)})$, then~$x^{(k)}$ is replaced by~$y$.
Moreover,~$y$ is also communicated to all subproblems whose index is within a certain user-defined radius $r \in \N_{\geq 1}$ of~$k$, and their best-so-far individual is also potentially updated.
Finally, the archive is updated with respect to~$y$.

\textbf{Subproblem definition.}
We consider the \emph{Chebyshev} decomposition of problem~$f$, which is one of the most common MOEA/D decompositions.
Given a weight vector $\lambda \in [0, 1]^{N + 1}$ in increasing order as well as a reference point $z^* \in \R^2$, the Chebyshev decomposition aims at minimizing the weighted $L_{\infty}$-distance of a given objective value to the reference point, where each subproblem receives a different weight.
Formally, for each $k \in [0 .. N]$, subproblem~$k$ is defined for all $x \in \{0, 1\}^n$ as
\begin{align*}
  g_k(x, z) = \min\{\lambda_k \cdot \abs{f_1(x) - z^*_1}, (1 - \lambda_k) \cdot \abs{f_2(x) - z^*_2}\} .
\end{align*}

In our analysis of~\omm, we choose $z^* = (n, n)$ to reflect the best-possible value in each objective.
Moreover, we use a uniform weight vector, that is, for all $k \in [0 .. N]$, we choose $\lambda_k = \frac{k}{N}$.
We note that, together with the assumption $N = n$, we thus obtain for all $k \in [0 .. N]$ and all $x \in \{0, 1\}^n$ that
\begin{align}
  \label{eq:ommChebyshev}
  g_k(x)
  = \min\left\{\frac{k}{n} (n - |x|_0), \frac{n - k}{n} (n - |x|_1)\right\}
  =
  \begin{cases}
    \frac{k (n - |x|_0)}{n} & \text{if $|x|_0 < k$, and} \\
    \frac{|x|_0 (n - k)}{n} & \text{if $|x|_0 \geq k$.}
  \end{cases}
\end{align}
In particular, this implies for each $k \in [0 .. n]$ that all individuals with exactly~$k$ $0$s are minimizers for~$g_k$, as both terms in the minimum are equal.
Hence, each subproblem corresponds to exactly one Pareto optimum of \omm.

\textbf{Subproblem selection.}
We consider two types of selecting a subproblem: uniformly at random and cyclically, the former of which is self-explanatory.

When choosing a subproblem cyclically, the algorithm chooses consecutively a subproblem from $[0 .. N]$, starting in the first iteration with subproblem~$0$, and going after subproblem~$N$ back to~$0$.

\textbf{Bitwise mutation.}
Given an individual~$x$ as input (the \emph{parent}), bitwise mutation creates an offspring~$y$ by first copying~$x$ and then deciding independently for each position $i \in [1 .. n]$ of~$y$ whether to flip~$y_i$; each flip occurs with probability~$\frac{1}{n}$.

\begin{algorithm2e}[t]%
  \caption{
    The MOEA/D {\protect\cite{ZhangL07}} maximizing an objective function $f\colon \{0, 1\}^n \to \R^2$, given decomposition number $N \in \N_{\geq 1}$, subproblems $\{g_k\}_{k \in [0 .. N]}$, reference point $z \in \R^2$, and communication radius $r \in \N_{\geq 1}$.
  }
  \label{alg:moead}

  $P \gets \emptyset$\;
  \For{$k \in [0 .. N]$}{%
    choose $x^{(k)}$ uniformly at random from $\{0, 1\}^n$\;
    $P \gets P \cup \{x^{(k)}\}$\;
  }
  remove all elements from~$P$ dominated by other elements in~$P$\;

  \While{stopping criterion is not met}{
    choose a subproblem $k \in [0 .. N]$\;
    $y \gets$ create offspring from~$x^{(k)}$ via bitwise mutation\;\nllabel{line:mutation}
    \For{$i \in [k - r .. k + r] \cap [0 .. N]$}{%
      \lIf{$g_i(y, z) \leq g_i(x^{(i)}, z)$}{%
        $x^{(i)} \gets y$%
      }
    }
    remove all elements weakly dominated by~$y$ from~$P$ and add~$y$ to~$P$ if it is not weakly dominated by an element of $P$\;
  }
  \Return{$P$\;}
\end{algorithm2e}

\subsubsection*{Runtime on \omm}

We define the \emph{runtime} of the MOEA/D with decomposition number $N = n \in \N_{\geq 1}$ on \omm of problem dimension~$n$ to be the random number of \emph{function evaluations} (of~$\omm$) until the entire Pareto front of~$\omm$ is witnessed by the archive~$P$ of the MOEA/D, that is, $\{(k, n - k) \mid k \in [0 .. n]\} \subseteq \{\omm(x) \mid x \in P\}$.

We assume that each individual created by the MOEA/D is evaluated once upon its creation and then never again.
Hence, since the MOEA/D creates $n + 1$ individuals initially and then exactly one offspring in each iteration, its runtime is $n + 1$ plus the number of iterations until termination.

\section{Tight Runtime Guarantees for Non-Trivial Communication Radii}
\label{sec:runtimeAnalysis}

Our main result is \Cref{thm:moeadOmmQuadraticRuntime} below, which proves a tight expected runtime of $\Theta(n^2)$ function evaluations for the MOEA/D on \omm when using a communication radius of at least~$1$.

\begin{theorem}
  \label{thm:moeadOmmQuadraticRuntime}
  Let $n \in \N_{\geq 1}$.
  Consider the MOEA/D (\Cref{alg:moead}) with decomposition number~$n$, Chebyshev decomposition, reference point $(n, n)$, communication radius $r \in [1 .. n]$, and either uniform or cyclical subproblem selection, optimizing \omm.
  Then the expected runtime is $\Theta(n^2) = O\left(\frac{n^2}{\sqrt{r}} + n^2\right) \cap \Omega(n^2)$ function evaluations.
\end{theorem}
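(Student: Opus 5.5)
The plan is to prove the upper bound $O(n^2/\sqrt{r}+n^2)=O(n^2)$ and the lower bound $\Omega(n^2)$ separately, for a general class of mutation operators, and then check that bitwise mutation belongs to this class (\Cref{lem:bitwiseMutationBothTypes}).

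\textbf{Upper bound: structural observations.} By~\eqref{eq:ommChebyshev}, $g_k(x)$ is unimodal in $|x|_0$ with its minimum at $|x|_0=k$. Two consequences follow.
\begin{itemize}
  \item Once $x^{(k)}$ has exactly $k$ zeros, it keeps exactly $k$ zeros forever.
  \item Let $m$ be the largest number of zeros among current solutions. Any offspring with more than $m$ zeros (and at most $k$) is accepted by every subproblem $k>m$ within radius $r$ of the parent's subproblem.
\end{itemize}
I will treat the right end (the all-zeros optimum); the left end is symmetric.

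\textbf{Upper bound: the frontier process.} The ``frontier'' $m$ only increases. Let $d=n-m$. The key invariant to establish is that, after a short propagation phase, $\Omega(d)$ subproblems $k\in[m..n]$ hold a solution with $m$ zeros. Propagation works through accepted copies: with probability at least about $1/e$, mutation flips nothing, and the unchanged parent is then passed on to up to $r$ neighbours on each side. Given the invariant, one iteration raises $m$ with probability
\[
\Omega\left(\frac{d}{n}\cdot\frac{d}{en}\right),
\]
namely the chance of choosing one of the $\Omega(d)$ carrier subproblems times the chance of flipping one of its $d$ ones. Summing the waiting times $n^2/d^2$ over all $d$ gives $O(n^2)$.

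The propagation delay is where $r$ enters. A new frontier solution spreads over roughly $r$ subproblems per useful copy. Balancing the spreading time against the improvement rate should yield the intermediate bound $O(n^2/\sqrt{r})$ for partial progress, while the total is capped at $O(n^2)$.

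For the cyclic selection variant, I will replace the selection probability $1/(n+1)$ by a round-based argument. In one round of $n+1$ iterations every carrier is chosen once, so the number of rounds is the same sum divided by $n$.

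\textbf{Upper bound: skipped Pareto optima.} A remaining issue is a value $k$ that the frontier jumps over, through a mutation that flips at least two bits. Such jumps have probability $O(d/n)$ relative to a single-bit improvement. A skipped subproblem $k$ can then be repaired from its optimal neighbour $k\pm1$ (which lies within radius $r\ge1$) by a single-bit flip that communicates the result back. I expect the expected total cost of these repairs to be $O(n^2)$. Interior subproblems must also reach their optima; this is handled by the same argument, since the middle region is reached quickly.

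\textbf{Lower bound.} I will show that with constant probability, the first time a solution with at least $n-\sqrt n$ zeros appears, the frontier satisfies $d=\Theta(\sqrt n)$ (by initial concentration and since single improvements are small). From then on, I will use the fact that the number of subproblems holding a solution with $m$ zeros is at most $O(d+r)$, and in particular is $O(d)$ for those that can actually improve. The step from $d=1$ to $0$ then needs a parent with $n-1$ zeros, which only $O(1)$ subproblems can hold, combined with a specific single-bit flip. This costs $\Omega(n\cdot n)=\Omega(n^2)$ iterations in expectation.

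\textbf{Main obstacle.} The main difficulty is making the carrier invariant rigorous, in particular controlling how quickly a new frontier solution propagates. It can be overtaken by the next frontier improvement before it has spread. My first attempt will be a potential argument that counts carriers weighted by the distance to the frontier. I will show that this count grows geometrically, by a factor depending on $r$, faster than improvements occur.
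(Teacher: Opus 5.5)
Your architecture (an upper bound from propagating extremal copies, a lower bound for a separate operator class, bitwise mutation in both) is the paper's, and your balancing remark points the right way. However, both halves break at the quantitative core.

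\textbf{Upper bound.} The invariant that $\Omega(d)$ subproblems in $[m..n]$ carry the frontier value after a short phase is not available. The planned geometric growth of carriers is also false. A carrier only reaches indices within distance $r$, so a block of carriers grows only when one of the $O(r)$ carriers near its ends is selected and copies. That is growth by $O(r)$ indices with probability $O(r/n)$ per iteration, which is essentially linear in time. For $r=1$, reaching $\Omega(d)$ carriers takes of order $nd$ iterations, hence of order $n^3$ summed over $d$. Moreover, for $d\gg(nr)^{1/3}$ the frontier typically advances long before $\Omega(d)$ carriers exist, so the rate $\Omega(d^2/n^2)$ cannot be used there.

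The fix is the paper's. Aim only for $\ell_d=\min\{\lceil\sqrt{nr/d}\,\rceil,d\}$ carriers. Bound the time until either $\ell_d$ carriers exist or an improvement occurs by $O(\frac{n}{r}\ell_d)$, and the subsequent improvement by $O(\frac{n^2}{d\ell_d})$. Summing gives $O(n^2/\sqrt r)$ over $d\ge(nr)^{1/3}$ and $O(n^2)$ over the capped range $\ell_d=d$.

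Also track the value of the extreme subproblem itself rather than the global maximum, which may be held far from subproblem $n$. Once subproblems $0$ and $n$ are \emph{solved}, sweep inward from both ends: subproblem $k+1$ is solved from the solved subproblem $k$ in $O(n)$ expected iterations. No analysis of skipped values is then needed. Your repair step presupposes solved neighbours, which a frontier passing by does not provide.

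\textbf{Lower bound.} The claim that only $O(1)$ subproblems can hold a parent with $n-1$ zeros is unjustified. Your own count gives $O(1+r)$, and the preferences allow many transient holders. For example, subproblem $n-s$ holding $n-s-1$ zeros accepts an individual with $n-1$ zeros whenever $s^2\le n$. Each holder creates the optimum with probability $\Theta(1/n)$ per selection. Over the run, the accumulated risk from transient holders can therefore be $\Theta(1)$, and no bound on the per-iteration rate yields $\Omega(n^2)$.

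The missing idea is the paper's one-shot charging (its events B--E):
\begin{itemize}
  \item Only subproblems $n-1$ and $n$ hold this value persistently. They are selected $O(n)$ times in $n^2$ iterations, each time with risk $O(1/n)$.
  \item Every other subproblem $k$ leaves the value $n-1$ forever once it accepts a strictly more attractive individual. Its first value-changing mutation while holding that value lands in $[k..n-2]$, and is hence accepted, with probability $1-O(1/(n-k)^2+1/n)$. So all these subproblems avoid the optimum with probability at least $\prod_{k\le n-2}\bigl(1-O(1/(n-k)^2+1/n)\bigr)=\Omega(1)$.
  \item Parents with at most $n-2$ zeros add only $O(1/n^2)$ risk per iteration.
\end{itemize}
This is why the paper's type-B condition contains the $1/i^2$ term, and it makes your detour through $d=\Theta(\sqrt n)$ unnecessary.
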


\Cref{thm:moeadOmmQuadraticRuntime} shows a speed-up of order~$\ln n$ compared to the same setting but with a trivial communication radius of~$0$.
The reason for this speed-up is that individuals with a currently best-so-far value in one objective are successively communicated to neighboring subproblems, for which these individuals are also best-possible at that time.
This accumulation of such extremal individuals improves the rate at which an individual is chosen that has a decent chance to expand the Pareto front.
In contrast, for communication radius~$0$, each subproblem needs to find such an extremal individual by itself.

More formally\footnote{We assume uniform subproblem selection in this discussion, noting that cyclical selection, when viewed over~$n$ consecutive iterations, behaves similarly to uniform selection.}, we consider subproblem~$0$ and assume that its number of~$0$s is $k \in [1 .. \frac{n}{2}]$.
Except for at most once during the optimization, if subproblem~$0$ has a value of at most~$k$, so do subproblems~$1$ to~$\min\{k, r\} \eqqcolon M$, where the cap at~$k$ is due to the selection preference stated by \Cref{lem:preference}.
We call an individual with fewer than~$k$ $0$s an \emph{improving} individual, for this discussion.
When choosing subproblems~$0$ to~$M$ for offspring creation and not creating an improving individual but instead a copy, that is, an individual with exactly~$k$ $0$s, this individual is the new best-so-far individual for at least subproblem $M + 1$.
Analogously, subproblems~$1$ to $M + 1$, if chosen, communicate an individual with exactly~$k$ $0$s with constant probability to at least subproblem $M + 2$, if it exists, and so on.
Since each subproblem is chosen with probability~$\frac{1}{n}$, the communication spreads, in expectation, to $\ell \in [1 .. k]$ subproblems in $\ell \frac{n}{r}$ iterations.
Since creating an improving offspring from an individual with exactly~$k$ $0$s has a probability of order~$\frac{k}{n}$, if $\ell \in [1 .. k]$ subproblems in the population have an individual with exactly~$k$ $0$s, the probability to create an improving individual is of order~$k \ell / n^2$ and thus takes~$n^2 / (k \ell)$ iterations in expectation.
Hence, if $\ell \approx \sqrt{n r / k}$, then these two expected times coincide, resulting in finding an improving solution in order $n^{3/2} / \sqrt{r k}$, provided $k \geq (n r)^{1/3}$.
For smaller~$k$, it takes order $n^2 / k^2$ iterations in expectation, as there are not enough subproblems available for which a solution with exactly~$k$ $0$s is the best-so-far solution while no improvement is found.
Due to this limit, larger communication radii do not provide any further asymptotic speed-up when aiming to witness the entire Pareto front.
In other words, while there are enough solutions to reach, the upper runtime bound is $O(n^2 / \sqrt{r})$, and the algorithm profits from larger search radii.
However, at some point in time, this effect is capped, and the remaining runtime is $O(n^2)$, which is independent of~$r$, save for $r \geq 1$.

The lower bound follows a different line of thought.
Assume that subproblem~$0$ does not have an optimal individual yet, but subproblems~$1$ and~$2$ do, which we prove to occur very likely.
Then all remaining subproblems have a best-so-far solution of at least two~$0$s and, in particular, discard an individual with exactly one~$1$.
Consequently, only subproblems~$0$ and~$1$ have, each, a probability of order~$\frac{1}{n}$ to create the optimum for subproblem~$0$.
This results in at least order~$n^2$ iterations in expectation in order to solve subproblem~$0$.
All other subproblems have at most a chance of order~$\frac{1}{n^2}$, also resulting in a quadratic lower bound.

In the following, we prove the upper and lower bound of \Cref{thm:moeadOmmQuadraticRuntime} separately, following the outline above.
For each of these bounds, rather than only considering bitwise mutation, we consider a general class of mutation operators, which is always suited to our proof method.
Bitwise mutation is just one operator that lies in the intersection of these larger classes.

\subsection{A Quadratic Upper Bound for Non-Trivial Communication Radii}
\label{sec:upperBound}

For the upper bound of \Cref{thm:moeadOmmQuadraticRuntime}, we rely on communicating copies of individuals with currently extremal single-objective values among neighboring subproblems, as explained in \Cref{sec:runtimeAnalysis}.
Our analysis generalizes to a certain type of mutation operators that we call \emph{type-A operators}.

We say that a mutation operator $\pi\colon \{0, 1\}^n \to \{0, 1\}^n$ is a \emph{type-A operator} if and only if there exist constants $C_A, \alpha \in \R_{> 0}$ such that for all $x \in \{0, 1\}^n$, we have
\begin{itemize}
  \item $\Pr(|\pi(x)|_0 = |x|_0-1) \ge C_A\frac{|x|_0}{n}$,
  \item $\Pr(|\pi(x)|_0 = |x|_0+1) \ge C_A\frac{n-|x|_0}{n}$, and
  \item $\Pr(|\pi(x)|_0 = |x|_0) \ge \alpha$.
\end{itemize}

Our main result is the following.

\begin{theorem}
  \label{thm:moeadQuadraticUpperBound}
  Let $n \in \N_{\geq 1}$.
  Consider the MOEA/D (\Cref{alg:moead}) with decomposition number~$n$, Chebyshev decomposition, reference point $(n, n)$, communication radius $r \in [1 .. n]$, and either uniform or cyclical subproblem selection, optimizing \omm.
  Assume that the MOEA/D makes use of a type-A mutation operator in \mylineref{line:mutation}.
  Then the expected runtime is $O\left(\frac{n^2}{\sqrt{r}} + n^2\right)$ function evaluations.
\end{theorem}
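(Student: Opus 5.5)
We bound the time until subproblem~$0$ holds the all-ones string, with $|x|_0=0$, and, by symmetry, until subproblem~$n$ holds the all-zeros string. We then bound the time until all intermediate subproblems are optimal. By \eqref{eq:ommChebyshev}, subproblem~$i$ prefers fewer $0$s as long as $|x|_0 \ge i$. Hence the value $k \coloneqq |x^{(0)}|_0$ never increases. Moreover, once some subproblem holds an individual with $k$ $0$s, every subproblem $i \le k$ accepts any offspring with at most $k$ $0$s and at least $i$ $0$s. This is the preference property the paper refers to (\Cref{lem:preference}), which we use freely.

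\textbf{Phase 1 (extremal subproblems).} Fix the current minimum $k\ge1$ of $|x^{(0)}|_0$ and let $L$ be the number of consecutive subproblems $0,1,\dots,L-1$ whose best-so-far individual has exactly $k$ $0$s. We analyze the potential $L$.
\begin{itemize}
\item If a subproblem $j \in [L-r .. L-1]$ is chosen and the type-A operator produces a copy of the number of $0$s (probability $\ge\alpha$), then subproblems up to $\min\{L-1+r,k\}$ accept it. So $L$ grows by at least one at rate $\Omega(r\alpha/n)$ per iteration, as long as $L<k$.
\item If any of the $L$ subproblems is chosen, it creates an individual with $k-1$ $0$s with probability $\ge C_A k/n$. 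Subproblem~$0$ accepts it, so $k$ decreases. This happens at rate $\Omega(Lk/n^2)$.
\end{itemize}
A drift/waiting-time argument shows that the expected time to decrease $k$ is $O\bigl(\min_{\ell\le k}(\ell n/r + n^2/(k\ell))\bigr)$. Here $L$ restarts from $1$ after the decrease: subproblems $0$ and $1$ take the new individual since $r\ge1$, and more generally the whole block up to $\min\{r,k-1\}$ does.

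Choosing $\ell=\min\{k,\sqrt{nr/k}\}$ gives $O(n^{3/2}/\sqrt{rk})$ when $k^3\ge nr$ and $O(n^2/k^2)$ otherwise. Summing over $k=1,\dots,n/2$ gives
\[
\sum_k \frac{n^{3/2}}{\sqrt{rk}} = O\!\left(\frac{n^2}{\sqrt r}\right), \qquad \sum_k \frac{n^2}{k^2}=O(n^2).
\]
The initial $k$ may exceed $n/2$; larger $k$ only helps, since the rate $k/n$ is then constant.

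\textbf{Phase 2 (filling the middle).} Once subproblem~$0$ holds $0^{\ast}$ with zero $0$s and subproblem~$n$ holds the symmetric optimum, consider the set of solved subproblems. Subproblem~$i+1$ accepts any individual with $i+1$ $0$s (ties are accepted). An unsolved neighbour of a solved subproblem $i$ at a boundary gets solved when $i$ is chosen and a one-bit-count change in the right direction occurs. This has probability $\Omega(1/n)\cdot C_A\min\{(n-i)/n,\, i/n\}$. The number of subproblems with $i$ $0$s in $x^{(i)}$ equal to its optimum never decreases, because an optimum is never replaced by a non-optimum.

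Propagating from both ends toward the middle, front $i$ advances at rate $\Omega(1/n)$ near the middle. However, near the extremes, e.g.\ advancing from $i$ to $i+1$ on the left, the rate is $\Omega((n-i)/n^2)$, which is also $\Omega(1/n)$ for $i\le n/2$. So $n$ advances cost $O(n^2)$. Cyclical selection is handled by grouping iterations into rounds of $n+1$: each subproblem is chosen once per round, which yields the same rates up to constants.

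The main obstacle I expect is Phase~1. Specifically, making the interaction between $L$ growing and $k$ decreasing rigorous requires care on three points. First, $L$ can drop when a chosen subproblem produces fewer $0$s that only part of the block accepts, which is exactly the "except at most once" caveat. Second, subproblems $i>k$ prefer more $0$s, so the cap at $k$ has to be respected. Third, I must show that the expected waiting time is the minimum over $\ell$, not a harmonic-type sum. I would handle this by a phase argument: wait $O(\ell n/r)$ expected iterations for $L\ge\ell$, and treat any earlier decrease of $k$ as a success. After that, each iteration succeeds with probability $\Omega(\ell k/n^2)$.
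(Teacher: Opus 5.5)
Your overall plan matches the paper's. You propagate copies of subproblem~$0$'s current value to $\ell_k=\min\{\lceil\sqrt{nr/k}\rceil,k\}$ subproblems in $O(\frac{n}{r}\ell_k)$ iterations, wait $O(\frac{n^2}{k\ell_k})$ for an improvement, sum over~$k$, and then fill the middle from both ends at cost $O(n)$ per subproblem. However, Phase~1 contains a step that fails as written. You assert that when any of the $L$ block subproblems creates an individual with $k-1$ $0$s, ``subproblem~$0$ accepts it, so $k$ decreases.'' An offspring of subproblem~$j$ is only offered to subproblems in $[j-r .. j+r]$, so this holds only for $j\le r$. In the regime where communication matters, $\ell_k>r$, so most of the block cannot reach subproblem~$0$ directly. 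Hence $\Omega(\ell k/n^2)$ is the rate at which \emph{some block member} improves, not the rate at which $k$ decreases. This is exactly the mechanism behind your first listed obstacle. An improvement at some $j>r$ is taken by $[j-r .. j+r]$ but not by subproblem~$0$, so your ``exactly $k$'' potential drops to $L=j-r$. Calling this the ``except at most once'' caveat does not resolve it. In the paper, that caveat only covers the period during which subproblem~$0$ still holds its random initial individual. The event you describe can recur at every level~$k$, and your phase argument (wait until $L\ge\ell$, count a decrease of $k$ as success) has no provision for it.

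The repair is what the paper does. Define the block as the maximal prefix $[0 .. i]$ of subproblems with value \emph{at most}~$k$, and take as success the event that some subproblem in this prefix holds an individual with at most $k-1$ $0$s. Before success, every prefix member holds exactly $k$ $0$s. A subproblem $i'\le k$ holding exactly $k$ $0$s only accepts individuals with at most $k$ $0$s, so the prefix cannot shrink. After success, add a back-propagation phase. By the same copy argument run toward index~$0$, the improved value reaches subproblem~$0$ in $O(\frac{n}{r}\ell_k)$ expected iterations. This matches the forward cost, so your per-level bound $O(\frac{n}{r}\ell_k+\frac{n^2}{k\ell_k})$ and the final sum $O(\frac{n^2}{\sqrt r}+n^2)$ go through unchanged.

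Two smaller points. First, your opening claim that every subproblem $i\le k$ accepts any offspring with between $i$ and $k$ $0$s requires that subproblem's current value to be at least the offspring's. Second, the round-based treatment of cyclical selection should also be carried out in Phase~1: per round of $n+1$ iterations, an improvement occurs with probability $1-(1-C_A k/n)^{\ell_k}$. The paper does this explicitly.
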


In order to prove \Cref{thm:moeadQuadraticUpperBound}, we define more formally for each subproblem $k \in [0 .. n]$ when an individual replaces the best-so-far individual of subproblem~$k$.

Let $k \in [0 .. n]$.
For all $x, x' \in \{0, 1\}^n$, we say that~$x$ is \emph{at least as attractive as~$x'$ for subproblem~$k$}---denoted by $x \preceq_k x'$---if and only if $g_k(x) \le g_k(x')$.
Furthermore, if and only if $g_k(x) < g_k(x')$, we say that~$x$ is \emph{strictly more attractive} than~$x'$ for subproblem~$k$.

The following lemma shows for all subproblems $k \in [0 .. n]$ how the attractiveness of an individual is related to its distance of its numbers of~$0$s to~$k$.

\begin{lemma}\label{lem:preference}
  Let $k \in [0 .. n]$ and $x, x' \in \{0, 1\}^n$.
  If $|x|_0,|x|_0'\ge k$, then $x\preceq_k x'$ if and only if $|x|_0 \le |x'|_0$.
  Likewise, if $|x|_0,|x|_0'\le k$, then $x\preceq_k x'$ if and only if $|x|_0\ge |x|_0'$.

  Moreover, if $k \le \frac{\sqrt{n}}{2}$, $|x|_0 \le k$, and $|x'|_0 > k$, then $x \prec_k x'$.
\end{lemma}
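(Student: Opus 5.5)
The plan is to work directly with the closed form~\eqref{eq:ommChebyshev}, writing $a = |x|_0$ and $a' = |x'|_0$; we read the statement's $|x|_0'$ as $|x'|_0$. Since $x \preceq_k x'$ means $g_k(x) \le g_k(x')$, every claim reduces to comparing the one-variable function $h_k(a) \coloneqq g_k(x)$ at two points. The first step is to note that the two branches of~\eqref{eq:ommChebyshev} agree at $a = k$, because $k(n-k)/n$ is the value of both expressions there. Hence $h_k(a) = k(n-a)/n$ for all $a \le k$, and $h_k(a) = a(n-k)/n$ for all $a \ge k$.

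For the first two claims, I will use strict monotonicity of $h_k$ on each side of~$k$.

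On $[k .. n]$, $h_k$ is strictly increasing whenever $k < n$. If $k = n$, the range $[k .. n]$ contains only $a = a' = n$, and both sides of the equivalence hold trivially. So for $a, a' \ge k$ we get $h_k(a) \le h_k(a')$ if and only if $a \le a'$. Strictness is what gives the ``only if'' direction: $a > a'$ forces $h_k(a) > h_k(a')$.

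Symmetrically, on $[0 .. k]$, $h_k$ is strictly decreasing whenever $k > 0$. The case $k = 0$ is again trivial, since then $a = a' = 0$. This yields $x \preceq_k x'$ if and only if $a \ge a'$ when $a, a' \le k$.

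For the ``moreover'' part, I bound both sides crudely.

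If $a \le k$, then $h_k(a) = k(n-a)/n \le k$.

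If $a' > k$, then $a' \ge k+1$, so $h_k(a') \ge (k+1)(n-k)/n$ by the monotonicity just shown.

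It therefore suffices to show $kn < (k+1)(n-k)$. Expanding, this is equivalent to $k^2 + k < n$. Using $k \le \sqrt{n}/2$, we get $k^2 + k \le n/4 + \sqrt{n}/2 \le n/4 + n/2 < n$ for all $n \ge 1$, since $\sqrt{n} \le n$. This gives $g_k(x) < g_k(x')$, that is, $x \prec_k x'$.

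The only delicate points are the boundary cases $k \in \{0, n\}$ in the monotonicity argument, where strictness fails but the ranges degenerate to a single value. I expect no real obstacle beyond verifying the final elementary inequality.
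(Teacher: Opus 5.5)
Your proof is correct and follows essentially the same route as the paper: monotonicity of $g_k$ in $|x|_0$ on either side of $k$ for the first two claims, and for the third claim a comparison that reduces to the same inequality $k^2 + k < n$. The paper writes $|x|_0 = k-\ell$, $|x'|_0 = k+r$ and checks $\ell \le k < \frac{n-k}{k} \le r\frac{n-k}{k}$, whereas you go straight to the worst case $|x|_0 = 0$, $|x'|_0 = k+1$. Your version is also slightly more careful: it makes explicit the strictness needed for the ``only if'' directions, it handles the degenerate cases $k \in \{0, n\}$, and it avoids the division by $k$ that the paper's argument silently assumes is nonzero.
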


\begin{proof}
  First, note that \cref{eq:ommChebyshev} is only depends on the number of~$0$s of its argument.
  Hence, for all $y \in \{0, 1\}^n$, let $g'_k(|y|_0) \coloneqq g_k(y)$.
  The first two statements follow then directly from the fact that~$g'_k$ is decreasing on $[0 .. k]$ and increasing on $[k .. n]$.

  We now consider the case $k \le \frac{\sqrt{n}}{2}$, $|x|_0 \le k$, and $|x'|_0 > k$, relying again on the definition of~$g'$ as introduced above.
  Let $j = |x|_0$ and $j' = |x'|_0$, and let $\ell \in \N$ and $r \in \N_{\geq 1}$ such that $j=k-\ell$ and $j'=k+r$.
  Using the piecewise form of $g'_k$, we obtain
  \begin{align*}
    g'_k(j)\le g'_k(j')
    \iff \frac{k(n-(k-\ell))}{n}\le \frac{(k+r)(n-k)}{n}
    \iff \ell \le r\,\frac{n-k}{k} .
  \end{align*}

  Since $k\le \frac{\sqrt{n}}{2}$, we have $\frac{n-k}{k} \ge \frac{2n-\sqrt{n}}{\sqrt{n}} > \frac{\sqrt{n}}{2} \ge k \ge \ell$.
  Hence, since $r\ge 1$, we have $\ell < r\,\frac{n-k}{k}$ and thus $g'_k(k-\ell)< g'_k(k+r)$, concluding the proof.
\end{proof}

\begin{restatable}{lemma}{upperBoundTimeToExtremes}
  \label{lem:upperBound:timeToExtremes}
  Consider the same setting as in \Cref{thm:moeadQuadraticUpperBound}.
  Then the expected runtime for solving subproblems~$0$ and~$n$ is $O\left(\frac{n^2}{\sqrt{r}} + n^2\right)$ function evaluations.
\end{restatable}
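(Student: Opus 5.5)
By symmetry of \omm under complementing all bits (which swaps subproblems $k$ and $n-k$), I only treat subproblem~$0$: I bound the expected time until $x^{(0)}$ has zero $0$s. The total runtime then follows from a union argument over the two extremes, with expectations added.

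\textbf{Monotone potential.} Let $k_t$ be the number of $0$s of $x^{(0)}$ after iteration $t$. By \Cref{lem:preference}, $g_0$ is increasing in the number of $0$s, so $k_t$ never increases. I split the run into phases, one per value $k\in[1..n]$ of $k_t$, and bound the expected length $T_k$ of each phase. The goal is to show
\[
  E[T_k]=O\Bigl(\frac{n}{k}+\frac{n^{3/2}}{\sqrt{rk}}+\frac{n^2}{k^2}\Bigr).
\]
Summing over $k$ gives
\[
  O(n\log n)+O\Bigl(\frac{n^{3/2}}{\sqrt r}\sum_k k^{-1/2}\Bigr)+O(n^2)=O\Bigl(\frac{n^2}{\sqrt r}+n^2\Bigr)
\]
iterations, and the $n+1$ initial evaluations are negligible. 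Large $k$ (say $k>n/2$) is easy: subproblem~$0$ alone is selected with probability $1/n$ per iteration (for cyclic selection, once per $n+1$ iterations) and produces fewer $0$s with probability at least $C_A k/n$. This gives $O(n^2/k)$ per phase, which I must sharpen for mid-range $k$ using the spreading argument below.

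\textbf{Spreading of copies within a phase.} During phase $k$, with $k\le n/2$, I track $\ell$, the largest index such that subproblems $0,\dots,\ell$ all hold individuals with at most $k$ $0$s; I cap $\ell$ at $k$. By \Cref{lem:preference}, any individual with at most $k$ $0$s is at least as attractive as one with exactly $k$ $0$s for every subproblem $i\le k$, since $g_i$ decreases on $[0..i]$. Two consequences follow.
\begin{itemize}
\item $\ell$ never decreases within the phase, because subproblems $i\le\ell$ only accept individuals that are at least as good for them. I need care here: a subproblem $i\ge1$ could accept an individual with very few $0$s that is still best for it. Any such acceptance keeps the count at most $k$, so the invariant survives.
\item If a subproblem $j\in[\ell-r+1..\ell]$ is selected and creates a copy with the same number of $0$s (probability at least $\alpha$), the copy has at most $k$ $0$s. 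It reaches subproblem $\ell+1$ through the communication radius and is accepted there, because $\ell+1$ currently holds more than $k\ge$ its index\ldots{} I have to check that acceptance happens for $\ell+1\le k$. I would use \Cref{lem:preference} after noting that $x^{(\ell+1)}$ has more than $k$ $0$s, or otherwise $\ell$ would be larger.
\end{itemize}
Hence $\ell$ increases by at least $\min\{r,\ell+1\}$ with probability $\Omega(\min\{r,\ell+1\}/n)$ per iteration. Reaching $\ell\ge L$ therefore takes expected $O(nL/r+n\log r)$ iterations. Meanwhile, each of the $\ell+1$ subproblems holding at most $k$ $0$s is a parent that, once it has exactly $k$ $0$s, improves subproblem~$0$ with probability $\Omega(k/n)$ via a decrement. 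That decrement is communicated to subproblem~$0$ whenever the parent index is at most $r$, which is the point I must handle: improvements elsewhere do not directly reach subproblem~$0$.

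\textbf{Main obstacle.} The delicate step is exactly this propagation back to subproblem~$0$. My plan is to generalize the potential: use $m_t=\min_{i\le \ell}(\text{number of $0$s of }x^{(i)})$ rather than the count of subproblem~$0$ alone, and note that a decrement at index $i$ propagates down to $0$ in $O(ni/r)$ further iterations by the same copy-spreading argument run in reverse. Choosing $L=\min\{k,\sqrt{nr/k}\}$ balances the spreading time $nL/r$ against the improvement waiting time $n^2/(kL)$. This yields the claimed per-phase bound, with the $n^2/k^2$ term arising when $L=k$.

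For cyclic selection, I replace the per-iteration probabilities by per-round (every $n+1$ iterations) guarantees. Each subproblem is selected once per round, so the same expectations hold up to constants.
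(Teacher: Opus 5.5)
Your route is the paper's: phases indexed by the number $k$ of $0$s of $x^{(0)}$, spreading $k$-zero copies to $L=\min\{k,\sqrt{nr/k}\}$ subproblems, waiting $O(n^2/(kL))$ for a decrement among them, propagating it back to subproblem~$0$ in $O(nL/r)$ iterations, and summing.

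The genuine gap is the additive $O(n\log r)$ in your spreading time. Your per-phase bound $O(n/k+n^{3/2}/\sqrt{rk}+n^2/k^2)$ silently drops it, and it is not negligible: summed over the $\Theta(n)$ phases with $k\le n/2$, it gives $O(n^2\log r)$. Take $r=n$ and $k\in[n^{2/3},n/2]$. Then $L=n/\sqrt{k}\ge\sqrt{n}$, and the accounting behind your term charges $\Theta(n\log L)=\Theta(n\log n)$ per phase. That accounting grows $\ell$ one step at a time from $\ell=0$ at rate $(\ell+1)/n$. The total is then $\Theta(n^2\log n)$, exactly the bound the lemma is meant to beat.

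The missing observation is the paper's $M=\min\{r,\ell_k\}$. Phase~$k$ begins because subproblem~$0$ accepted an offspring $y$ with $k$ zeros. The parent of $y$ therefore has index at most $r$, so $y$ was offered to every subproblem in $[0..r]$. Each $i\in[1..\min\{r,k\}]$ that held more than $k$ zeros accepts $y$ by the first part of \Cref{lem:preference}, since both counts are at least $i$. The other subproblems already held at most $k$ zeros. Hence $\ell\ge\min\{r,k\}$ at the start of every phase except the initial one, which costs $O(n)$ extra until subproblem~$0$ is selected and copies. Spreading then costs $O(nL/r)$ with no additive term. A cruder fix also works: selecting subproblem~$\ell$ itself and copying pushes $\ell$ to $\min\{\ell+r,k\}$ with probability $\Omega(1/n)$. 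This gives $O(n)$ per phase, which still sums to $O(n^2)$.

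Two local claims also need repair.

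First, the monotonicity you cite is backwards. Since $g_i$ decreases on $[0..i]$, having fewer than $i$ zeros is \emph{less} attractive for subproblem~$i$; for $i=k$, an individual with $k-1$ zeros is strictly worse than one with $k$ zeros. The correct fact is this: for $i\le k$, any individual whose number of zeros lies in $[i..k]$ is strictly more attractive than any individual with more than $k$ zeros. Copies with fewer than $i$ zeros are covered only by the second part of \Cref{lem:preference}, that is, only for $i\le\sqrt{n}/2$. So spread copies with exactly $k$ zeros, and stop as soon as some subproblem in the window holds fewer than $k$ zeros. That is an improvement already found, to be propagated back, which is what the paper does. This also gives the non-decrease of $\ell$: a subproblem $i\le k$ holding exactly $k$ zeros never accepts more than $k$ zeros, by strict monotonicity of $g_i$ on $[i..n]$.

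Second, a copy made at $j\in[\ell-r+1..\ell]$ only reaches $j+r$. So with probability $\Omega(\min\{r,\ell+1\}/n)$ the index $\ell$ grows by at least~$1$, not by $\min\{r,\ell+1\}$. The $O(nL/r)$ term you derive is the one that matches this corrected statement.
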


The proof of \Cref{lem:upperBound:timeToExtremes} follows the same outline as mentioned at the beginning of \Cref{sec:runtimeAnalysis}. Details are deferred to the appendix.
Letting $k \in [1 .. n]$ denote number of~$0$s of the best-so-far individual of subproblem~$0$, we bound the expected time until (1)~subproblems~$1$ to $\ell_k = \min \{\ceil{\sqrt{n r / k}}, k\}$ also have a best-so-far solution with exactly~$k$ $0$s or (2)~the minimum number of~$0$s among these subproblems decreases.
By \Cref{lem:preference}, for each subproblem with index in $[0 .. \ell_k]$, a solution with exactly~$k$ $0$s is at least as attractive as a larger value of the subproblem while we did not find an improvement.
Since copies are created with a constant probability, by the definition of a type-A operator, and since we inform either all relevant subproblems immediately or at least~$r$ of them every $O(\frac{n}{r})$ iterations on average, it takes $O(\frac{n}{r} \ell_k)$ in expectation in order to propagate a solution with exactly~$k$ $0$s to~$\ell_k$ subproblems (or to find an improvement earlier).

Given~$\ell_k$ subproblems as above, the expected time for finding an improvement is $O(\frac{n^2}{k \ell_k})$, as we have a probability of at least~$\frac{\ell_k}{n}$ of choosing a suitable subproblem, each of which has a probability of at least order~$\frac{k}{n}$ to flip exactly one of the~$k$ $0$s.
Substituting~$\ell_k$ and summing over all values for~$k$ concludes the proof.

We show that once the extremal subproblems are solved, the remaining subproblems are also solved in at most quadratic time.

\begin{restatable}{lemma}{upperBoundfromExtremesToCenter}
  \label{lem:upperBound:fromExtremesToCenter}
  Consider the same setting as in \Cref{thm:moeadQuadraticUpperBound}.
  Then once the subproblems~$0$ and~$n$ are solved, the expected runtime for witnessing the entire Pareto front of \omm is $O(n^2)$ function evaluations.
\end{restatable}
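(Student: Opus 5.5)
The plan is a fitness-level argument over the subproblems that are already \emph{solved}, that is, whose best-so-far individual has exactly the optimal number of~$0$s. First I would record two invariants. (i) A solved subproblem stays solved. By \Cref{lem:preference} (or directly by \cref{eq:ommChebyshev}), subproblem~$k$ has all individuals with exactly~$k$ $0$s as its unique class of minimizers. Hence $x^{(k)}$ can afterwards only be replaced by an individual with exactly~$k$ $0$s. (ii) If some subproblem~$k$ is solved, the value $(k, n-k)$ is witnessed by~$P$ from that moment on. The solving individual was added to~$P$, because every point of \omm is Pareto-optimal, so the only way it is weakly dominated is by an archive element with the same objective value. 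Moreover, an archive element is only ever removed in favor of an element with the same objective value. Hence it suffices to bound the expected time until all $n+1$ subproblems are solved.

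Now let $a$ be the largest index such that subproblems $0, \dots, a$ are all solved, and let $b$ be the smallest index such that subproblems $b, \dots, n$ are all solved. By assumption $a \ge 0$ and $b \le n$, and by~(i) the value of~$a$ never decreases and the value of~$b$ never increases. As long as $a+1 < b$, I would consider two events. The first is choosing subproblem~$a$ and producing an offspring with $a+1$ $0$s; by the type-A property this mutation has probability at least $C_A\frac{n-a}{n}$. The second is choosing subproblem~$b$ and producing an offspring with $b-1$ $0$s, which has probability at least $C_A\frac{b}{n}$. Since $r \ge 1$, such an offspring is offered to subproblem $a+1$ (respectively $b-1$). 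It is a minimizer there, so it is accepted, and $a$ increases (respectively $b$ decreases) by at least one.

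For uniform selection, the probability of progress in one iteration is at least
\[
  \frac{1}{n+1} \cdot C_A \cdot \frac{(n-a) + b}{n} \ge \frac{C_A}{n+1},
\]
using $b \ge a$. Thus each of the at most $n$ progress steps takes $O(n)$ expected iterations, and summing gives $O(n^2)$. For cyclical selection, I would group the iterations into rounds of $n+1$ consecutive iterations. Each round chooses both subproblem~$a$ and subproblem~$b$ once, where $a$ and $b$ are the values at the start of the round, and these remain solved throughout the round by~(i). Of the two success probabilities $C_A\frac{n-a}{n}$ and $C_A\frac{b}{n}$, one is at least $C_A/2$, because their sum is at least~$C_A$. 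Hence every round makes progress with constant probability, independently of the history, since the mutation randomness is fresh in each iteration. This gives $O(1)$ expected rounds, i.e., $O(n)$ iterations, per progress step, and again $O(n^2)$ in total, plus the $n+1$ initial evaluations.

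The main obstacle I expect is the bookkeeping in the cyclical case, namely that $a$ and $b$ may change in the middle of a round. I would handle this by only claiming progress relative to the values at the start of the round. This is sound because the chosen subproblems stay solved and any change within the round only helps. A minor point that also needs care is that the improvement from subproblem~$a$ always reaches subproblem~$a+1$ exactly because $r \ge 1$.
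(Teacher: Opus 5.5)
Your proposal is correct and follows essentially the paper's fitness-level argument: the contiguous block of solved subproblems grows by one whenever the frontier subproblem mutates to the neighbouring value, which is accepted there because $r \ge 1$. Each such step costs $O(n)$ expected iterations under either selection scheme. The only difference is bookkeeping. You advance both fronts at once and use $(n-a)+b \ge n$ to guarantee that one of the two success probabilities is at least $C_A/2$. The paper instead treats $[0 .. \lceil n/2 \rceil]$ starting from subproblem~$0$, where $\frac{n-k}{n} \ge \frac{1}{2}$, and handles the other half by symmetry. Your explicit invariants, that solved subproblems stay solved and that their values remain witnessed by $P$, fill in steps the paper leaves implicit.
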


\begin{proof}
  We show that, starting from subproblem~$0$, we solve all subproblems in $[0 .. \ceil{\frac{n}{2}}]$ in expected time $O(n^2)$.
  For the other subproblems, an analogous argument applies by swapping all~$0$s and~$1$s in the argument, as explained also initially in the proof of \Cref{lem:upperBound:timeToExtremes}.
  Hence, the overall expected time is at most twice the time that we estimate for the lower half.

  We show for all $k \in [0 .. \ceil{\frac{n}{2}}]$ that once subproblem~$k$ is solved, we solve subproblem $k + 1$ in expected time $O(n)$, leading to an overall expected time of $O(n^2)$ when summing over all values of~$k$.
  To this end, we use the same terminology as in the proof of \Cref{lem:upperBound:timeToExtremes}, calling the number of~$0$s of the best-so-far individual of a subproblem its \emph{value}.

  Consider an iteration such that $k \in [0 .. \ceil{\frac{n}{2}} - 1]$ is maximal such that for all $i \in [0 .. k]$, we have that subproblem~$i$ is solved.
  When selecting subproblem~$k$, then by the definition of a type-A operator, mutation produces value $k + 1$ with probability at least $C \frac{n - k}{n} \geq \frac{C}{2}$.

  For uniform selection, subproblem~$k$ is selected, in expectation, in $O(n)$ iterations.
  For cyclical selection, it is selected for sure in~$n$ iterations.
  Hence, for either subproblem selection, we solve subproblem $k + 1$ in expected time $O(n)$, concluding the proof.
\end{proof}

\Cref{thm:moeadQuadraticUpperBound} follows by first applying \Cref{lem:upperBound:timeToExtremes} and then \Cref{lem:upperBound:fromExtremesToCenter}.

\subsection{A Quadratic Lower Bound for Non-Trivial Communication Radii}
\label{sec:lowerBound}

Similar to \Cref{sec:upperBound}, we introduce a general mutation operator, called \emph{type-B operator}, before we state our quadratic lower bound.
In contrast to a type-A operator, which requires a decent probability for copies and for exploring the $1$-bit neighborhood of an individual, a type-B operator guarantees that the all-$1$s individual is not created too likely when its parent has at least one~$1$.

We say that a mutation operator $\pi\colon \{0, 1\}^n \to \{0, 1\}^n$ is a \emph{type-B operator} if and only if for all $x \in \{0, 1\}^n$, we have
\begin{itemize}
  \item if $|x|_0 = 1$, then $ \Pr(|\pi(x)|_0 \neq 0) \geq 1 - O\left(\frac{1}{n}\right) $,
  \item if $|x|_0 = 1$, then there is a constant $C_B \in (0, 1)$ such that for all $i \in [2 .. n]$, we have $\Pr(|\pi(x)|_0 \in [2, i] \mid |\pi(x)|_0 \neq 1) \geq 1 - C_B\left(\frac{1}{i^2} + \frac{1}{n}\right) > 0$, and
  \item if $|x|_0 \geq 2$, then $ \Pr(|\pi(x)|_0 \geq 1) \geq 1 - O\left(\frac{1}{n^2}\right) $.
\end{itemize}

Our main result in this section is the following lower bound.

\begin{restatable}{theorem}{moeadQuadraticLowerBound}
  \label{thm:moeadQuadraticLowerBound}
  Let $n \in \N_{\geq 1}$.
  Consider the MOEA/D (\Cref{alg:moead}) with decomposition number~$n$, Chebyshev decomposition, reference point $(n, n)$, communication radius $r \in [1 .. n]$, and either uniform or cyclical subproblem selection, optimizing \omm.
  Assume that the MOEA/D makes use of a type-B mutation operator in \mylineref{line:mutation}.
  Then the expected runtime is $\Omega(n^2)$ function evaluations.
\end{restatable}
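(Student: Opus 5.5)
We follow the outline given at the beginning of \Cref{sec:runtimeAnalysis}: we show that, with constant probability, the process reaches a configuration in which subproblem~$0$ is unsolved while subproblems~$1$ and~$2$ are solved. From such a configuration, creating the all-$1$s individual (value~$0$) takes $\Omega(n^2)$ iterations in expectation. Since the runtime is at least the number of iterations, the overall expectation is then $\Omega(n^2)$.

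Throughout, we call the number of~$0$s of a subproblem's best-so-far individual its \emph{value}. By \Cref{lem:preference}, once subproblem~$i$ holds value~$i$, its value never changes.

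\textbf{Step 1 (the blocking configuration).} Consider the first iteration in which an individual with at most one~$0$ is created. By the third property of a type-B operator, a parent with value at least~$2$ produces value~$0$ with probability only $O(1/n^2)$, and can produce value~$1$ (first bullet of \Cref{lem:preference} and the piecewise form of $g_k$). Hence, with probability $1-o(1)$, the first individual with at most one~$0$ has exactly one~$0$.

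Because $r\ge1$, this individual is communicated to subproblem~$0$ and to subproblems in $[1..1+r]$. By \Cref{lem:preference}, it replaces the best-so-far individual of subproblems~$0$ and~$1$, and of subproblem~$2$ as well, since value~$1$ is better for $g_2$ than any value $\ge 3$ (apply the ``moreover'' part with $k=2\le\sqrt n/2$), unless subproblem~$2$ already holds value~$2$. The latter is even better for our purpose.

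We then need subproblem~$2$ to obtain value~$2$ before value~$0$ appears. A value-$1$ parent produces value $0$ with probability $O(1/n)$ (first type-B bullet). It produces value in $[2,i]$ conditionally with probability $\ge 1-C_B(1/i^2+1/n)$; taking $i=2$ gives a constant probability of producing exactly value~$2$ relative to the other outcomes. Comparing the rates shows that, with constant probability, value~$2$ reaches subproblem~$2$ (and thus later subproblem~$1$ holds value~$1$) before value~$0$ is created. Subproblem~$0$ may, however, have switched to value~$1$.

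To fix this, we instead use the configuration in which subproblems~$1$ and~$2$ are solved and no individual has value~$0$. In this configuration, subproblem~$0$'s value is at least~$1$.

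\textbf{Step 2 (slow progress from the configuration).} In this configuration, every subproblem $i\ge2$ has value at least~$2$. By the first bullet of \Cref{lem:preference} applied for $k=i$, it never accepts value~$1$. So value-$1$ parents exist only in subproblems~$0$ and~$1$. Each such subproblem is selected with probability $1/(n+1)$ per iteration under uniform selection, or twice per $n+1$ iterations under cyclical selection. From a value-$1$ parent, value~$0$ is produced with probability $O(1/n)$. All other parents have value $\ge2$ and produce value~$0$ with probability $O(1/n^2)$ by the third bullet.

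Thus the per-iteration probability of creating value~$0$ is $O(1/n^2)$, and this bound persists for as long as the configuration holds. The configuration is stable under all events except creating value~$0$, because of \Cref{lem:preference}. A geometric argument then gives $\Omega(n^2)$ expected iterations, and multiplying by the constant probability from Step~1 yields the theorem.

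\textbf{Main obstacle.} The main obstacle is Step~1, specifically showing with constant probability that value~$2$ lands in subproblem~$2$ before value~$0$ is ever created. This requires a careful race argument. Value~$1$ can be created repeatedly from parents of larger value, and the distribution of offspring values from value-$1$ parents is controlled only via the conditional second type-B bullet.

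I would first try to bound the number of iterations spent with some value-$1$ parent before subproblem~$2$ is fixed by $O(n)$ in expectation. The first bullet then makes the probability of value~$0$ in that window at most a constant strictly less than~$1$, possibly after choosing constants using $C_B<1$.

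The small-$n$ cases are absorbed into the $\Omega$.
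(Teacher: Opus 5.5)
\textbf{There is a genuine gap in Step~2, and it is exactly where the difficulty of the proof lies.} You claim that in your configuration every subproblem $i \ge 2$ has value at least~$2$ and never accepts value~$1$. For $i \ge 3$ this is false, and it does not follow from \Cref{lem:preference}: its first part only compares two values that are both at least~$i$, whereas value~$1$ lies below~$i$. A subproblem $i \ge 3$ holding a value $j > i$ accepts a value-$1$ individual whenever $g'_i(1) \le g'_i(j)$, with $g'_i$ being $g_i$ as a function of the number of~$0$s. For instance, this holds for $i = 3$, $j = 4$ and all $n \ge 9$. By the ``moreover'' part, every value above~$i$ loses to value~$1$ for all $i \le \sqrt{n}/2$.

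Such individuals keep being produced while your configuration holds. Already for $r = 1$, subproblem~$2$ (holding value~$2$) creates a value-$1$ offspring with probability $\Theta(1/n)$ under bitwise mutation, and this offspring is sent to subproblem~$3$. Note that offspring go to $[k-r .. k+r]$ around the parent's index~$k$, not to $[0 .. 1+r]$ as you write in Step~1. For larger~$r$, many subproblems can hold value~$1$ simultaneously. Hence the per-iteration probability of creating value~$0$ is not $O(1/n^2)$, the configuration is not stable, and your geometric argument fails. Step~1 has further problems: type-B operators give no lower bound on creating value~$1$, so your $1-o(1)$ claim is unfounded, and the race is left open. These become moot once the real issue is handled.

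\textbf{The missing idea is what the second type-B property is designed for; you use it only for $i = 2$.} If a subproblem $i \ge 2$ holds value~$1$, then with probability at least $1 - C_B(1/i^2 + 1/n)$ its first non-copy offspring has a value in $[2 .. i]$. Such a value is strictly more attractive than value~$1$ and pushes its $g_i$-value strictly below $g'_i(1)$, so subproblem~$i$ never holds value~$1$ again. Thus every $i \ge 2$ gets essentially one dangerous trial. Since $\sum_{i} (1/i^2 + 1/n) = O(1)$, all these trials succeed jointly with constant probability.

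The remaining sources of value~$0$ are also controlled:
\begin{itemize}
  \item Only subproblems~$0$ and~$1$ can hold value~$1$ persistently. They are selected $O(n)$ times during the first $n^2$ iterations (deterministically for cyclical selection, by a Chernoff bound for uniform selection), and each selection risks value~$0$ with probability only $O(1/n)$.
  \item Parents with value at least~$2$ create value~$0$ with probability $O(1/n^2)$ per iteration.
\end{itemize}
Together, with constant probability no all-$1$s individual appears during the first $n^2$ iterations. This is the paper's route; it yields the bound directly and makes both your blocking configuration and the race of Step~1 unnecessary.
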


We prove \Cref{thm:moeadQuadraticLowerBound} by showing that with constant probability, during the first~$n^2$ iterations, the all-$1$s individual is not created.
To this end, we show that with high probability, none of the $n + 1$ initial individuals is the all-$1$s individual.
Afterward, if a subproblem $i \in [2 .. n]$ is selected for the first time and its best-so-far individual has exactly one~$1$, its offspring has, with constant probability, at least two~$1$s, which are at least as attractive and thus retained, by \Cref{lem:preference}.
Moreover, we show that with constant probability, subproblems~$0$ and~$1$ are chosen at most~$3n$ times during the first~$n^2$ iterations, and each time that they are chosen, their offspring has at least one~$0$.
Last, we show that with constant probability, whenever during the first~$n^2$ iterations a subproblem is selected whose currently best-so-far solution has at least two~$0$s, its offspring has at least one~$0$.
This shows the claim.
Details are deferred to the appendix.

\subsection{Proof of the Tight Quadratic Runtime Guarantee}

\Cref{thm:moeadOmmQuadraticRuntime} is a consequence of \Cref{thm:moeadQuadraticUpperBound,thm:moeadQuadraticLowerBound} when knowing that bitwise mutation is both a type-A and a type-B operator.
We show this with the following statement.

\begin{lemma}
  \label{lem:bitwiseMutationBothTypes}
  For $n \in \N_{\geq 16}$, bitwise mutation is a type-A and a type-B operator.
\end{lemma}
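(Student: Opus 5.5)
We check the two definitions separately, in each case estimating the probability of a few concrete flip patterns under bitwise mutation. Throughout we use $(1-\frac1n)^{n-1} \ge \frac1e$ and $(1-\frac1n)^n \ge \frac14$ for $n\ge 2$.

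\textbf{Type A.}
\begin{itemize}
  \item Writing $z=|x|_0$, the event that exactly one of the $z$ zeros flips and nothing else flips has probability $z\cdot\frac1n(1-\frac1n)^{n-1}\ge \frac{1}{e}\cdot\frac{z}{n}$. This event implies $|\pi(x)|_0=z-1$.
  \item Symmetrically, flipping exactly one of the $n-z$ ones gives probability at least $\frac1e\cdot\frac{n-z}{n}$ for $|\pi(x)|_0=z+1$.
  \item The event that no bit flips has probability $(1-\frac1n)^n\ge\frac14$ and implies $|\pi(x)|_0=z$.
\end{itemize}
Hence $C_A=\frac1e$ and $\alpha=\frac14$ work for all $n\ge 2$.

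\textbf{Type B, first and third conditions.}
\begin{itemize}
  \item If $|x|_0=1$, reaching $|\pi(x)|_0=0$ requires flipping the unique zero. This has probability $\frac1n$, so $\Pr(|\pi(x)|_0\ne0)\ge 1-\frac1n$.
  \item If $|x|_0\ge2$, reaching $0$ zeros requires flipping at least the (at least two) specific zero positions. This has probability at most $\frac1{n^2}$.
\end{itemize}

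\textbf{Type B, second condition.} This is the main obstacle. Let $x$ have $|x|_0=1$, and condition on $|\pi(x)|_0\ne1$. We must show the result lands in $[2,i]$ with conditional failure probability at most $C_B(\frac1{i^2}+\frac1n)$. The plan has four steps.
\begin{enumerate}
  \item Lower-bound the denominator. The event ``flip exactly one of the $n-1$ ones and not the zero'' has probability $\frac{n-1}{n}(1-\frac1n)^{n-1}\ge \frac{1}{2e}$ and lies inside $\{|\pi(x)|_0\ne1\}$. So $\Pr(|\pi(x)|_0\ne1)\ge c$ for a constant $c>0$.
  \item Bound the failure event $\{|\pi(x)|_0=0\}$. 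It has probability at most $\frac1n$ by the first condition, contributing $O(\frac1n)$ after dividing by $c$.
  \item Bound the failure event $\{|\pi(x)|_0>i\}$. It needs at least $i$ flips among the ones, so by a union bound its probability is at most $\binom{n-1}{i}n^{-i}\le \frac1{i!}$. We then use $\frac1{i!}\le \frac{2}{i^2}$ for all $i\ge2$ (check $i=2,3$ directly; larger $i$ follow by induction). This failure event contributes $O(\frac1{i^2})$.
  \item Combine. The conditional failure probability is at most $\frac{1}{c}(\frac1n+\frac2{i^2})$.
\end{enumerate}

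The fiddly part is that the definition requires $C_B\in(0,1)$ together with $1-C_B(\frac1{i^2}+\frac1n)>0$, whereas the bound from step 4 has constant $\frac{2}{c}$, which exceeds $1$. So the constants must be sharpened. I would work with the exact conditional probability rather than the crude bound. The dominant good outcome is ``exactly one 1 flipped, zero not flipped,'' of probability roughly $e^{-1}$. The bad outcomes have the following sizes.
\begin{itemize}
  \item Landing at $0$ has probability about $\frac{1}{en}$.
  \item Landing above $i$ has probability at most about $e^{-1}/i!$ (Poisson-like tail times the no-zero-flip factor).
\end{itemize}
This gives a conditional failure ratio of roughly $\frac{e^{-1}/i!}{\Pr(\ne1)}$ plus $O(\frac1n)$. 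Using $\Pr(|\pi(x)|_0\ne1)\ge \Pr(\text{exactly one 1 flipped})+\Pr(\text{two 1s flipped})$, this ratio is comfortably below $\frac{1}{2i^2}$ for $i\ge2$. Making these estimates explicit is where the assumption $n\ge16$ would enter, namely in the numeric checks of $(1-\frac1n)^{n}$-type bounds and of the small-$i$ cases. The result is a concrete $C_B$, say $C_B=\frac{9}{10}$, for which $1-C_B(\frac14+\frac1{16})>0$ also holds.
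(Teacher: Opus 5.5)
Your type-A part and the first and third type-B conditions match the paper. Your steps 1--4 for the second condition also follow the paper's route: bound $\Pr(|\pi(x)|_0=0)$ and $\Pr(|\pi(x)|_0>i)$ unconditionally, the latter by $\binom{n-1}{i}n^{-i}\le\frac1{i!}\le\frac2{i^2}$, then divide by a constant lower bound on $\Pr(|\pi(x)|_0\neq1)$.

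The gap is your last paragraph. The sharpening to $C_B=\frac{9}{10}$ cannot succeed, because no constant $C_B<1$ works for bitwise mutation. The event $|\pi(x)|_0=1$ occurs exactly when nothing flips, or when the zero and exactly one 1 flip. Hence $\Pr(|\pi(x)|_0\neq1)=1-(1-\frac1n)^{n-1}\to1-\frac1e$. Meanwhile $\Pr(|\pi(x)|_0\geq3)\geq(1-\frac1n)\bigl(1-2(1-\frac1n)^{n-1}\bigr)\to1-\frac2e$. So at $i=2$ the conditional failure probability tends to $\frac{e-2}{e-1}\approx0.42$, which forces $C_B\gtrsim1.67$. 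Already at $n=16$ it is about $0.41$, forcing $C_B>1.3$.

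Your heuristic goes wrong in two places. First, the mass above $i$ is the whole tail $e^{-1}\sum_{j\geq i}\frac{1}{j!}$, not $e^{-1}/i!$. Second, even with your own numbers the ratio at $i=2$ is $\frac{1/(2e)}{3/(2e)}=\frac13$, not below $\frac{1}{2i^2}=\frac18$; the claim also fails at $i=3$.

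Read literally, the clause $C_B\in(0,1)$ is therefore unsatisfiable, and the paper's proof does not meet it either. The paper takes $C_B=\frac{2}{1-1/e}\approx3.16$ and uses $n\geq16$ only to check $C_B(\frac14+\frac1{16})<1$, i.e., the ``$>0$'' part. That is all the proof of \Cref{thm:moeadQuadraticLowerBound} actually needs: a constant $\gamma<1$ with $C_B(\frac1{i^2}+\frac1n)\leq\gamma$.

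To finish your proof under this reading, the sharpening must go into the denominator, not the numerator. Your $c=\frac1{2e}$ (or $\frac14$) gives $C_B=\frac2c\geq8$, and $8\cdot\frac5{16}>1$, so even positivity fails at $i=2$. Instead, use three ingredients:
\begin{itemize}
\item the exact value $\Pr(|\pi(x)|_0\neq1)=1-(1-\frac1n)^{n-1}$;
\item $\Pr(|\pi(x)|_0=0)=\frac1n(1-\frac1n)^{n-1}$;
\item $\binom{n-1}{i}n^{-i}\leq(1-\frac1n)^2\frac{2}{i^2}$.
\end{itemize}
The exact value is slightly smaller than the $1-(1-\frac1n)^n$ the paper divides by, since flipping the zero together with exactly one 1 also leaves one 0. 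With these, the failure probability is at most $\frac{2}{1-1/e}(\frac1{i^2}+\frac1n)$ for all $n\geq16$ and $i\geq2$. This reduces to $es-1-(e-1)s^3\geq0$ for $s=1-\frac1n\in[\frac{15}{16},1)$. Finally, $\frac{2}{1-1/e}\cdot\frac{5}{16}<0.99$ gives the required positivity.
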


\begin{proof}
  Let $x \in \{0, 1\}^n$ be the parent, and let $\pi\colon \{0, 1\}^n \to \{0, 1\}^n$ denote bitwise mutation.
  We show the properties of the two types separately.

  \paragraph*{Type-A operator.}
  In order for the offspring to have one fewer~$0$ than~$x$, exactly one of the~$0$s of~$x$ needs to flip, and all other bits must not flip.
  Consequently, we obtain $\Pr(|\pi(x)|_0 = |x|_0 - 1) = \frac{|x|_0}{n} (1 - \frac{1}{n})^{n - 1} \geq \frac{1}{e} \frac{|x|_0}{n}$.
  For increasing the number of~$0$s by one, an analogous reasoning applies, requiring to flip exactly one of the $n - |x|_0$ $1$s of~$x$, satisfying both bounds with $C_A = \frac{1}{e} \in \R_{> 0}$.

  Last, in order to create a copy, no bit of~$x$ must flip.
  Thus, we have $\Pr(|\pi(x)|_0 = |x|_0) = (1 - \frac{1}{n})^n \geq \frac{1}{e}(1 - \frac{1}{n}) \geq \frac{1}{2e}$, also satisfying the last property.

  \paragraph*{Type-B operator.}
  If $|x|_0 = 1$, then in order for the offspring not to be the all-$1$s individual, it is sufficient that the single~$0$ in~$x$ does not flip.
  Thus, we obtain $\Pr(|\pi(x)|_0 \neq 0) \geq 1 - \frac{1}{n}$, as required.

  For the next property, let $i \in [2 .. n]$, and assume that $|x|_0 = 1$.
  We estimate the probability that the offspring has a number of~$0$s in $[2 .. i]$, conditional on the offspring not being a copy.
  To this end, let $p_c \coloneqq (1 - \frac{1}{n})^n$ denote the probability of creating a copy.
  Let~$Z$ denote the unconditional event that the offspring is the all-$1$s individual, and let~$B$ denote the unconditional event that the offspring has more than~$i$ $0$s.
  Then we have $\Pr(|\pi(x)|_0 \in [2, i] \mid |\pi(x)|_0 \neq 1) = \bigl(1 - p_c - \Pr(Z) - \Pr(B)\bigr) / (1 - p_c) = 1 - (\Pr(Z) + \Pr(B)) / (1 - p_c)$.
  We continue with bounding the probabilities of~$Z$ and~$B$.

  For~$Z$, the single~$0$ in~$x$ must flip and none of the other bits.
  Hence, $\Pr(Z) = \frac{1}{n} (1 - \frac{1}{n})^{n - 1} = \frac{1}{n (1 - 1 / n)} (1 - \frac{1}{n})^n \leq \frac{1}{e} \frac{1}{n - 1}$.
  For~$B$, it is necessary that at least~$i$~$1$s in~$x$ flip.
  Hence, since $i \geq 2$, we have $\Pr(B) \leq \frac{1}{n^i} \binom{n - 1}{i} \leq \frac{(n - 1)^i}{n^i} \frac{1}{i!} \leq \frac{1}{i (i - 1)} \leq \frac{2}{i^2}$.

  By $1 - p_c \geq 1 - \frac{1}{e}$ and combining the bounds above, we obtain $\Pr(|\pi(x)|_0 \in [2, i] \mid |\pi(x)|_0 \neq 1) \geq 1 - \frac{2}{1 - 1/e} (\frac{1}{i^2} + \frac{1}{n})$, which is strictly positive for $i \geq 2$ and $n \geq 16$, proving this property.

  Last, assume $|x|_0 \geq 2$.
  In order for the offspring to be the all-$1$s individual, at least all~$0$s in~$x$ must flip.
  Thus, we have $\Pr(|\pi(x)|_0 \geq 1) \geq 1 - \frac{|x|_0}{n^{|x|_0}} \geq 1 - \frac{2}{n^2}$, concluding the proof.
\end{proof}

We note that while \Cref{lem:bitwiseMutationBothTypes} requires $n \in \N_{\geq 16}$, the expected runtime of the MOEA/D on \omm for each value $n \in [1 .. 15]$ is constant, thus also proving our claim for these cases.

\section{A Super-Quadratic Lower Bound for Trivial Communication Radius}
\label{sec:trivialCommunicationRadius}

We complement \Cref{thm:moeadOmmQuadraticRuntime} by showing that its runtime performance is worse by a factor of order~$\ln n$ if no communication exchange among the subproblems is used.

\begin{corollary}
  \label{cor:moeadOmmNoCommunicationSuperQuadraticRuntime}
  Let $n \in \N_{\geq 1}$.
  Consider the MOEA/D (\Cref{alg:moead}) with decomposition number~$n$, Chebyshev decomposition, reference point $(n, n)$, communication radius $r = 0$, and either uniform or cyclical subproblem selection, optimizing \omm.
  Then the expected runtime is $\Theta(n^2 \log n)$ function evaluations.
\end{corollary}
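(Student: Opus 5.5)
The corollary has two halves, and I would prove them separately. The upper bound is $O(n^2 \log n)$ and is already known: Li et~al.~\cite{LiZZZ16} prove it for exactly this setting. They do not use the communication radius anyway, so their analysis covers $r = 0$ directly. For completeness, I would also recall the short argument. With $r=0$, subproblem~$0$ evolves as an independent process driven only by its own selections. By \Cref{lem:preference}, its value, meaning its number of~$0$s, never increases. From value~$k$ it decreases with probability $\Omega(k/n)$ per selection. Subproblem~$0$ is selected once every $n+1$ iterations (cyclic) or with probability $\frac{1}{n+1}$ (uniform). Summing $O(n \cdot n/k)$ over $k$ gives $O(n^2 \log n)$ for the extremes. \Cref{lem:upperBound:fromExtremesToCenter} then needs $r \ge 0$ only, since subproblem~$k+1$ can be solved by offspring of neighbours only through the archive. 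I have to be careful here: with $r=0$, subproblem $k+1$ is updated only by its own offspring. Witnessing the Pareto front, however, needs only the archive~$P$. An offspring with value $k+1$ created from subproblem~$k$ enters~$P$, since all points of \omm are mutually incomparable. So the same $O(n)$ per step works, giving $O(n^2)$ after the extremes.

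For the lower bound $\Omega(n^2 \log n)$, I would reduce to a coupon-collector / \oea-type lower bound for each subproblem in isolation. The key point is that with $r=0$, the best-so-far individual of subproblem~$i$ changes only when subproblem~$i$ itself is selected. Consider the subproblems $i \in [0 .. \lfloor \sqrt{n}/2 \rfloor]$ near the extreme. By \Cref{lem:preference}, each of them behaves like an elitist $(1+1)$-process minimizing the distance to value~$i$ from above. I would argue about the archive rather than the subproblems, since the archive can witness value~$0$ via offspring of other subproblems. To witness $(n,0)$, some offspring must have value~$0$. Whp, every subproblem $i \ge 1$ keeps a best-so-far value $\ge 1$ forever; by the type-B reasoning, offspring of parents with value $\ge 2$ hit~$0$ only with probability $O(n^{-2})$. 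So essentially only subproblem~$0$ (and subproblem~$1$ once solved, with probability $1/n$ per selection) can produce it. The cleanest route is to cite the known $\Omega(n \log n)$ lower bound for the \oea on OneMax. The classical argument shows that with constant probability the \oea needs at least $c\, n \ln n$ iterations from a random start to reach the optimum. Subproblem~$0$ is exactly a \oea minimizing $|x|_0$. It is selected only once per $\Theta(n)$ iterations, so it needs $\Omega(n^2 \log n)$ iterations with constant probability.

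The main obstacle is making sure no other subproblem shortcuts this by creating the all-$1$s string first. Over $O(n^2\log n)$ iterations, parents with value $\ge 2$ contribute only $O(\log n)\cdot o(1)$ expected hits, which is too weak a bound to discard. To fix this, I would instead use many extremes at once. I would apply the \oea lower bound to the number of~$0$s of subproblem~$0$ together with the union over parents. Alternatively, I would cite Doerr and Qu~\cite{DoerrQ23LB}, or the global SEMO bound~\cite{BossekS24}, for the fact that the last few extreme points require $\Omega(n\log n)$ relevant mutation trials. For this I would observe that for $r=0$, the dynamics of subproblem values near~$0$ match the independent processes analysed there. Combining these with the deterministic or uniform selection rate $\frac{1}{n+1}$ gives the $\Omega(n^2\log n)$ bound.
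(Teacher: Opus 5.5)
\textbf{Lower bound.} Your core reduction matches the paper's argument exactly, and the paper stops there. For $r=0$, subproblem~$0$ is an independent \oea minimizing $|x|_0$, a function with a unique optimum. By Droste, Jansen, and Wegener it therefore needs $\Omega(n\log n)$ own evaluations, and it receives only a $\frac{1}{n+1}$ share of the iterations.

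You correctly notice what this argument passes over. The runtime is defined via the archive, and for $r=0$ the all-$1$s string may be created as an offspring of some subproblem $i\neq 0$. It then enters~$P$ without ever becoming $x^{(0)}$. (That string corresponds to the point $(0,n)$, not $(n,0)$.)

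However, your proposal leaves precisely this step open. ``Applying the \oea bound together with the union over parents'' is not an argument. The lower bounds for the \NSGAtwo and the global SEMO cannot be imported either: those populations are not independent $(1+1)$-processes, and nothing proven there concerns the MOEA/D.

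Moreover, the critical parents are not those with value $\geq 2$ but those with value~$1$. Once subproblem~$1$ is solved, it keeps value~$1$ forever, since $g_1(0)>g_1(1)$. It then creates the all-$1$s string with probability about $\frac{1}{en}$ per selection. So what you must show is that no subproblem comes within one bit of the all-$1$s string during the first $\Theta(n^2\log n)$ iterations.

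\textbf{A way to close the gap.} Use that the algorithm employs bitwise mutation, and pre-sample, for each subproblem~$i$, the flip masks of its successive selections.

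For $r=0$, $x^{(i)}$ is only ever replaced by offspring of subproblem~$i$. Hence each of the first~$t$ offspring of subproblem~$i$ arises from the initial $x^{(i)}$ by applying a subset of its first~$t$ masks. A position that is~$0$ initially and flipped by none of these masks is therefore~$0$ in all of them.

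Take $t=\lfloor\frac{1}{2}(n-1)\ln n\rfloor$. Then such a position survives with probability $(1-\frac{1}{n})^t\geq n^{-1/2}$, independently over positions. With at least $n/3$ initial zeros, which holds with probability $1-e^{-\Omega(n)}$, a Chernoff bound gives at least two survivors with probability $1-e^{-\Omega(\sqrt{n})}$.

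Now consider the first $T=\lfloor\frac{1}{4}(n+1)(n-1)\ln n\rfloor$ iterations. Each subproblem is selected at most~$t$ times: deterministically for cyclic selection, and with probability $1-e^{-\Omega(n\log n)}$ for uniform selection. A union bound over the $n+1$ subproblems then shows that, with probability $1-o(1)$, no individual created by time~$T$ is the all-$1$s string. This gives an expected runtime of at least $(1-o(1))T=\Omega(n^2\log n)$.

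\textbf{Upper bound.} Relying on Li et al.\ is what the paper does, so this half is fine. Your self-contained sketch of the second phase is wrong for $r=0$, though. The offspring of subproblem~$k$ with $k+1$ zeros enters~$P$ but does not become $x^{(k+1)}$. So the next step has no parent with value $k+1$, and the invariant behind \Cref{lem:upperBound:fromExtremesToCenter} is lost. With $r=0$, every subproblem must be solved by its own lineage, which is exactly what the coupon-collector argument of Li et al.\ handles.
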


\Cref{cor:moeadOmmNoCommunicationSuperQuadraticRuntime} follows directly from well-known results.
The upper bound is given by Li et~al.~\cite[Proposition~$4$]{LiZZZ16}, following from an $n$-times application of the coupon collector problem.
The lower bound follows from noting that solving subproblem~$0$ takes $\Omega(n \log n)$ function evaluations if only considering function evaluations spent on this subproblem.
This is a special case of the fact that the $(1 + 1)$~evolutionary algorithm (EA) takes $\Omega(n \log n)$ function evaluations on each objective function with a unique optimum \cite[Theorem~$1$]{DrosteJW98ecj}.
Since the MOEA/D with communication radius~$0$ runs an independent \ooea for each subproblem, this result translated directly to subproblem~$0$.
The overall lower bound follows by noticing that, on average, we require~$n+1$ function evaluations in order to perform~$1$ function evaluation for subproblem~$0$.

\section{Conclusion}
\label{sec:conclusion}

We derived tight runtime guarantees for the MOEA/D (\Cref{alg:moead}) when optimizing the classic \omm benchmark of size~$n$.
\Cref{thm:moeadOmmQuadraticRuntime} shows that for any communication radius $r \geq 1$, the expected runtime is $\Theta(n^2)$ function evaluations, which improves the previously best bound of $O(n^2 \log n)$~\cite{LiZZZ16}.
Our improvement is a result of studying the process more carefully, showing that currently best-so-far individuals are propagated to sufficiently many subproblems quickly.
This results in a higher chance of selecting a useful subproblem for finding new Pareto-optima.
However, as our lower bound shows, the exact size of~$r$, once at least~$1$, is irrelevant, as the number of subproblems that share the same sub-optimal individuals is limited by the problem structure of \omm.

We contrast \Cref{thm:moeadOmmQuadraticRuntime} with \Cref{cor:moeadOmmNoCommunicationSuperQuadraticRuntime}, showing that if $r = 0$, the expected runtime is $\Theta(n^2 \log n)$.
Hence, the speed-up for $r \geq 1$ is, indeed, due to the exchange of individuals among subproblems.

As usual in the mathematical runtime analysis of randomized search heuristics \cite{NeumannW10,AugerD11,Jansen13,ZhouYQ19,DoerrN20}, we were able to analyze rigorously only one particular but commonly studied situation, and for arbitrary value of the parameter $r$ and for a broader class of mutation operators. From the working principles of our proofs, we are optimistic that the co-evolutionary solution exchange is advantageous also in other settings. This is clear when crossover is used with some constant probability $p_c < 1$, simply because our proof is monotonic in the sense that the creation of other individuals at most slows down the proven progress by the time spent on these other individuals. Similarly, when using the PBI decomposition with weight $\theta > 1$, we obtain the same set of subproblems, and hence the same process. The weighted-sum decomposition leads to a particular behavior on the \omm benchmark that is most likely not representative for a typical use of the MOEA/D~\cite{ZhangZZD26}. For the number of subproblems $N$ different from the size $n+1$ of the Pareto front, it is somewhat clear that $N > n+1$ simply implies that some points of the Pareto front are targeted by more than one subproblem. This should give a similar behavior as shown in our work except for that all numbers of individuals are scaled up appropriately. When $N < n+1$, for any value of the communication radius, there is the risk that points of the Pareto front are skipped and then are hard to find. Apart from this, the co-evolutionary solution exchange should work as in our setting, that is, the outermost solution is communicated to a large number of subproblems, which then in parallel search for more outermost solutions.

What is less clear from our work, and an interesting next step is to study, is for which other benchmarks such a speed-up exists and whether the runtime differs for different non-trivial communication radii. A natural next benchmark is \lotz, for which, different to \omm, only very few individuals are on the Pareto front.
Deriving in such a setting a central lemma like \Cref{lem:preference}, which succinctly summarizes when a subproblem updates its best-so-far individual, seems to be a logical next step.

Another natural question is to study an additional positive effect of crossover.
In this setting, since~$r$ also determines which subproblems are chosen for the crossover, the exact value of~$r$ may play a more important role. In particular, it could be that a small value of $r$ leads to a more suitable parent selection than a uniform parent selection as often used in domination-based MOEAs.

Last, we currently lack mathematical guarantees for the importance of the communication radius for problems with at least three objectives.
So far, there is only a single work studying such a setting for the MOEA/D, and it does not consider Pareto front coverage~\cite{HuangZLL21}.

\subsection*{Acknowledgments}

Martin S. Krejca is supported by the French National Agency for Research (ANR) via the JCJC grant titled MultiEDA (ANR-25-CE23-2418-01).
This research is also supported by the FMJH Program Gaspard Monge for optimization and operations research and their interactions with data science.

\bibliographystyle{alpha}
\bibliography{ich_master.bib,alles_ea_master.bib,rest.bib}

\newpage
\input{checklist.tex}

\newpage
\appendix
\section{Appendix for ``Understanding the Co-evolutionary Solution Exchange of the MOEA/D: Already the Smallest Communication Radius Gives the Best-Possible Speed-Up''}

This appendix contains the proofs of \Cref{lem:upperBound:timeToExtremes} and \Cref{thm:moeadQuadraticLowerBound} and is meant to be read at the reviewer's discretion only.
For the sake of convenience, we repeat the lemmas before we state their proof.

\upperBoundTimeToExtremes*

\begin{proof}
  Note that the discussion for subproblem~$n$ is the same as for subproblem~$0$ when swapping all~$0$s and~$1$s in the discussion, due to the symmetry of \cref{eq:ommChebyshev} and of type-A operators in terms of these numbers.
  Thus, we consider in the following only the case for subproblem~$0$.
  The case for subproblem~$n$ follows immediately and increases the total expected number of function evaluations for solving both subproblems by at most a factor of~$2$.

  Let $k \in [1 .. n]$.
  In the following, assuming that the number of~$0$s of the currently best-so-far individual for subproblem~$0$ is~$k$, we derive an upper bound on the expected time until we find a solution with at most $k - 1$ $0$s.
  This value is then communicated to subproblems~$1$ to~$r$, which are updated if they have a worse value.
  The expected number of fitness evaluations until solving subproblem~$0$ is then, by linearity of expectation, at most the sum of all of these expected waiting times.

  For the sake of brevity, we refer to the number of~$0$s of the best-so-far solution for any subproblem as its \emph{value}, and we consider the single-objective function $g'_0\colon [0 .. n] \to \R$ such that, for all $x \in \{0, 1\}^n$, we have $g'_k(|x|_0) = g_k(x)$.

  We first estimate the time until sufficiently many individuals have the same value as subproblem~$k$.
  Afterward, we estimate the time it takes in order to improve the value of subproblem~$0$.
  Last, we conclude.

  \paragraph{Communicating copies.}
  Let $\ell_k = \min \{\ceil{\sqrt{n r / k}}, k\}$ be the number of subproblems to which we communicate value~$k$, and let $M = \min \{r, \ell_k\}$ be the index of the largest subproblem such that, except for at most one during the run, all subproblems~$0$ to~$M$ have at most value~$k$, which we detail further below.
  Moreover, let $i \in [M .. \ell_k]$ be the largest index such that for all $i' \in [0 .. i]$, subproblem~$i'$ has value at most~$k$.
  We estimate the expected time until $i \geq \ell_k$ by estimating the time it takes for~$i$ to increase by one and then adding up these~$\ell_k$ waiting times.
  Note that, except if subproblem~$0$ is the initial solution, subproblems~$0$ to~$M$ all have value at most~$k$, because when subproblem~$0$ is updated, the following happens:
  Consider subproblem $j \in [1 .. M]$, noting that $j \leq k$.
  If subproblem~$j$ had a value greater than~$k$ before, then by \Cref{lem:preference}, when updating subproblem~$0$, the solution to subproblem~$0$ is at least as attractive to subproblem~$i$, and so subproblem~$i$ also has a value of~$k$.
  Otherwise, if subproblem~$j$ already has a value less than~$k$, then its value after the update is at most that of subproblem~$0$, which has value~$k$.
  We discuss the case that not all subproblems~$0$ to~$M$ have value at most~$k$ at the end of the proof, as this case occurs at most once during the run of the algorithm.

  Note that if $r \geq \ell_k$, then immediately subproblems~$0$ to~$\ell_k$ have value at most~$k$.
  Hence, the complexity in this case is bounded from above by any positive value.
  Thus, we assume in the following $r < \ell_k$.
  In this case, when any subproblem~$i$ to $i - M = i - r$ is selected, then by the definition of a type-A operator, with probability at least $\alpha>0$, this subproblem mutates and then propagates to at least subproblem $i + 1$ an individual of value $j \in [0 .. k]$ by creating a copy of its current best-so-far individual.
  Since~$i$ to $i - r$ is selected on average every $O(\frac{n}{r})$ iterations, this communication occurs after an expected $O(\frac{n}{r \alpha}) = O(\frac{n}{r})$ number of iterations.
  As discussed above, each such update leads to a value of at most~$k$ for subproblem $i + 1$.

  Since we need to reduce~$i$ at most $\ell_k - M$ times, index~$i$ is at least~$\ell_k$ after an expected time of $O\bigl(\frac{n}{r}(\ell_k - M)\bigr) = O(\frac{n}{r} \ell_k)$.

  Once at least~$\ell_k$ subproblems have a value of at most~$k$, we wait until at least one of these subproblems produces, via mutation, an individual with value at most $k-1$.
  To this end, we make a case distinction with respect to the subproblem selection mechanism.
  The communication radius is irrelevant for this step, since we only consider~$\ell_k$ subproblems.

  \paragraph{Uniform selection for reaching $k-1$.}
  The probability to pick a subproblem with value at most~$k$ and that mutation then produces an individual with a strictly smaller amount of~$0$s is by definition of a type-A operator at least $C_A \cdot \frac{k}{n} \cdot \frac{\ell_k}{n}$.
  Consequently, the expected time until such an improvement occurs is $O(\frac{n^2}{k\,\ell_k})$.

  \paragraph{Cyclical selection for reaching $k-1$.}
  The probability that no subproblem with value at most~$k$ mutates to value at most $k-1$ after all subproblems are selected once is, by the definition of a type-A operator, at most $\left(1 - C_A\frac{k}{n}\right)^{\ell_k}$.
  Thus, with probability at least $p_k \coloneqq 1 - (1 - C_A\frac{k}{n})^{\ell_k}$,
  at least one subproblem generates an individual of value at most $k-1$.
  The expected waiting time for this event is $O(n/p_k)$ iterations, because~$p_k$ considers~$n$ consecutive iterations.

  We bound $\frac{n}{p_k}$, noting that $\frac{k \ell_k}{n} \leq 1$.
  For $x \in (0, 1)$, using for all $t \in \R_{\geq 0}$ that $e^{-t}\le \frac{1}{1+t}$, we obtain for $t=xC_A$ that $
    \frac{1}{1-e^{-xC_A}}
    \le \frac{1}{1-\frac{1}{1+xC_A}}
    = \frac{1+xC_A}{xC_A}
    \le \frac{1+C_A}{xC_A}$.
  Thus,
  \begin{align*}
    \frac{n}{p_k}
    \le n \cdot (1+C_A)\frac{n}{C_Ak \ell_k}
    = O\left(\frac{n^2}{k \ell_k}\right) .
  \end{align*}

  \paragraph*{Reaching $k - 1$.}
  For either subproblem selection, we get in expected time $O(\frac{n^2}{k\ell_k})$ a solution with value at most $k-1$ for a subproblem in $[0 .. \ell_k]$.
  Then, using an analogous argument to the one for creating copies, we propagate this value to subproblem~$0$ in time $O(\frac{n}{M} \ell_k)$.
  Hence, in expected time $O\bigl(\frac{n}{M} \ell_k + \frac{n^2}{k \ell_k}\bigr)$, subproblem~$0$ has a value of at most $k - 1$.

  \paragraph*{Concluding.}
  Except for the initial case, which we discuss last, the overall runtime is bounded by $O\left(\frac{n}{r}\sum_{k=1}^{n} \ell_k+\sum_{k=1}^{n}\frac{n^2}{k\ell_k}\right)$.
  We estimate this quantity by making a case distinction with respect to~$k$.
  If $k \geq \ceil{(n r)^{1/3}}$, then $\ell_k = \ceil{\sqrt{n r / k}}$.
  The runtime contribution in this case amounts to $O\left(\frac{n}{r} \sum_{k = \ceil{(n r)^{1/3}}}^n \ceil{\sqrt{n r / k}} + n^2 \sum_{k = \ceil{n^{1/3}}}^n \frac{1}{k \ceil{\sqrt{n r / k}}}\right) = O\left(\frac{n^{3/2}}{\sqrt{r}} \sum_{k = \ceil{(n r)^{1/3}}}^n \frac{1}{\sqrt{k}}\right) = O\bigl(\frac{n^2}{\sqrt{r}}\bigr)$.

  If $k < \ceil{(n r)^{1/3}}$, then $\ell_k = k$, and we obtain a total runtime contribution of $O\left(\frac{n}{r} \sum_{k = 1}^{\ceil{(n r)^{1/3}}} k + n^2 \sum_{k = 1}^{\ceil{(n r)^{1/3}}} \frac{1}{k^2}\right) = O(n^2)$.

  For the final case that subproblem~$0$ contains the initial solution, we wait at most until a copy of this solution is made, as this communicates it immediately to subproblems~$1$ to~$M$, which then have a value of at most~$k$, as discussed above.
  By the definition of a type-A operator, this happens after an expected $O(\frac{n}{\alpha})$ iterations, which is negligible for the overall runtime contribution.
  This concludes the proof.
\end{proof}

\moeadQuadraticLowerBound*

\begin{proof}
  As in the proofs of \Cref{sec:upperBound}, we use the terminology that the \emph{value} of a subproblem is the number of~$0$s of its best-so-far individual.
  We prove that within $\Omega(n^2)$ iterations, if all subproblems have an initial value of at least~$1$, which occurs with overwhelming probability, then the all-$1$s individual is not produced with an at least constant probability.
  Hence, the expected runtime is at least quadratic.

  As sketched at the beginning of \Cref{sec:runtimeAnalysis}, we show that it is likely that most subproblems have a value of at least~$2$ before mutation even creates the all-$1$s individual, due to the properties of the type-B operator.
  In this case, at most the subproblems~$0$ and~$1$ have value~$1$, which is too few in order to create the all-$1$s individual with a probability that is sub-constantly close to~$1$.
  More formally, we aim at proving that the probability of a set of favorable events $Q = A \cap B \cap C \cap D \cap E$, defined in the following, satisfies $\Pr(Q) = \Theta(1)$.
  We define:
  \begin{itemize}
    \item \textbf{Event A.} All initial values are at least~$1$.
    \item \textbf{Event B.} For all $i \in [2 .. n]$, the first time that subproblem~$i$ is selected and has value~$1$, its offspring has a number of~$0$s in $[2 .. i]$.
    \item \textbf{Event C.} In the first~$n^2$ iterations, subproblem~$0$ or~$1$ is selected at most~$3n$ times.
    \item \textbf{Event D.} In the first~$n^2$ iterations, if subproblem~$0$ or~$1$ has value~$1$, its offspring has at least one~$0$.
    \item \textbf{Event E.} In the first~$n^2$ iterations, every time a selected subproblem has a value greater than or equal to~$2$, its offspring has at least one~$0$.
  \end{itemize}

  Assuming that all of these events have positive probability, which we show in the following, we obtain $\Pr(Q) =\Pr(A)\cdot \Pr(C \mid A) \cdot \Pr(D \mid A, C) \cdot \Pr(B\mid A,C,D) $$ $$ \cdot \Pr(E \mid A,B,C,D)$.
  We analyze each of these terms separately.

  During initialization, for each subproblem $k \in [0 .. n]$, the probability that subproblem~$k$ has the all-$1$s individual is $\frac{1}{2^n}$, independently of any other random choices.
  Hence, $\Pr(A) = \left(1 - \frac{1 }{2^n}\right)^{n+1} = \Theta(1)$.

  For uniform selection, let $N_{01} \in [0 .. n^2]$ be the random number of times that subproblem~$0$ or~$1$ are selected during the first~$n^2$ iterations.
  Hence,~$N_{01}$ follows a binomial law with~$n^2$ trials and a success probability of $2/(n+1)$, and it is independent of~$A$.
  Hence, by a Chernoff bound, it follows that $\Pr(N_{01} > 3n \mid A) \leq \exp(-\Omega(n))$.
  Thus, $\Pr(C\mid A) \geq 1 - \exp(-\Omega(n)) = \Theta(1)$.

  In cyclical selection, each element is selected every~$n$ iterations, so subproblem~$0$ or~$1$ are selected at most~$2n$ times in~$n^2$ iterations.
  Thus, $\Pr(C\mid A) = 1 = \Theta(1)$, too.

  Conditional on~$C$, subproblem~$0$ or~$1$ are selected at most~$3n$ times during the first~$n^2$ iterations.
  Each time they have a value of at least~$1$, by the definition of a type-B operator, the probability of the offspring being the all-$1$s individual is $O(1/n)$.
  Hence, $\Pr(D \mid A, C) \geq \left(1 - O\left(\frac{1}{n}\right)\right)^{3n} = \Theta(1)$.

  Let~$S$ denote the random variable representing the set of indices from $[2 .. n]$ such that for all $i \in S$, subproblem~$i$ triggers the event~$B$ during execution, that is, the first time that subproblem~$i$ is selected, its offspring has a number of~$0$s in $[2 .. i]$.
  Let~$i \in S$, and let~$p_i$ be the probability that the mutation that triggers event~$B$ of~$i$ produces an offspring with value in $[2 .. i]$.
  By the definition of a type-B operator, we have $p_i \geq 1 - C_B (1/i^2 + 1/n)$.
  Let $c^* = C_B (1/4 + 1/n)$.
  Noting that for all $\gamma, x \in (0, 1)$, the inequality $1 - x \geq e^{-\gamma x}$ is equivalent to $\gamma \geq \frac{1}{x} \ln(\frac{1}{1 - x})$, and noting that the function $x \mapsto \frac{1}{x} \ln(\frac{1}{1 - x})$ is monotonically increasing over $[0, 1]$, we see that for all constants $\gamma \in (0, 1)$ and for all $x \in [0 .. \gamma]$, we have $1 - x \geq  e^{-\frac{1}{\gamma} \ln(\frac{1}{1 - \gamma}) x}$, where $\frac{1}{\gamma} \ln(\frac{1}{1 - \gamma})$ is also a constant.
  Thus, there is a constant $c' \in (0, 1)$ such that for all $i \in S$, we have $1 - p_i \geq e^{-c' p_i}$.
  Using this inequality and the fact that mutations are independent of each other, we obtain
  \begin{align*}
     & \Pr(B\mid A,C,D)
    = E[\Pr(B \mid S,A,C,D)]
    = E\left[\prod\nolimits_{i \in S} p_i \mid A,C,D\right]                                     \\
     & \quad \geq \prod_{i=2}^{n} \left(1 - C_B \left(\frac{1}{i^2} + \frac{1}{n}\right)\right)
    \geq \exp\left(-c' \cdot C_B \sum_{i = 2}^{n} \left(\frac{1}{i^2} + \frac{1}{n}\right)\right)
    = \Theta(1) .
  \end{align*}

  For the last term in our probability bound for~$Q$, each time a subproblem with a value of at least~$2$ is selected, by the definition of a type-B operator, the offspring has at least one~$0$ with probability at least $1 - O(1/n^2)$.
  Hence, $\Pr(E \mid A,B,C,D) \geq \left(1 - O\left(\frac{1}{n^2}\right)\right)^{n^2} = \Theta(1)$.

  Altogether, we obtain $\Pr(Q) = \Theta(1)$, as desired.
  Last, note that any run of the MOEA/D conditional on~$Q$ lasts at least~$n^2$ iterations.
  By~$A$, each subproblem has initially a value of at least~$1$.
  By~$D$, if subproblem~$0$ or~$1$ is selected during the first~$n^2$ iterations, their offspring is never the all-$1$s individual.
  By~$E$, for each subproblem $i \in [2 .. n]$, if their value is at least~$2$ during the first~$n^2$ iterations, their offspring has at least one~$0$.
  And by~$B$, if subproblem~$i$ with value~$1$ creates an offspring, then its number of~$0$s is in $[2 .. i]$.
  By \Cref{lem:preference}, such an offspring is at least as attractive as the current best-so-far individual of subproblem~$i$.
  Hence, the offspring is the next best-so-far individual.
  This implies that subproblem~$i$ cannot have value~$1$ anymore.
  Consequently,~$E$ concludes the proof.
\end{proof}

\end{document}

%% file: checklist.tex
\section*{NeurIPS Paper Checklist}

\begin{enumerate}

      \item {\bf Claims}
      \item[] Question: Do the main claims made in the abstract and introduction accurately reflect the paper's contributions and scope?
      \item[] Answer: \answerYes{}
      \item[] Justification: Our contribution, which is clearly highlighted in the introduction, explicitly references our main results and mentions their scope.
            Moreover, we refer in the introduction explicitly to \Cref{sec:relatedWork}, where we detail how our paper relates to existing results, thus providing a good overview of the scope.
      \item[] Guidelines:
            \begin{itemize}
                  \item The answer \answerNA{} means that the abstract and introduction do not include the claims made in the paper.
                  \item The abstract and/or introduction should clearly state the claims made, including the contributions made in the paper and important assumptions and limitations. A \answerNo{} or \answerNA{} answer to this question will not be perceived well by the reviewers.
                  \item The claims made should match theoretical and experimental results, and reflect how much the results can be expected to generalize to other settings.
                  \item It is fine to include aspirational goals as motivation as long as it is clear that these goals are not attained by the paper.
            \end{itemize}

      \item {\bf Limitations}
      \item[] Question: Does the paper discuss the limitations of the work performed by the authors?
      \item[] Answer: \answerYes{}
      \item[] Justification: We discuss the limitations of our results in the introduction and in the conclusion.
            Moreover, the details in \Cref{sec:relatedWork} highlight differences to existing results, thus informing about limitations.
      \item[] Guidelines:
            \begin{itemize}
                  \item The answer \answerNA{} means that the paper has no limitation while the answer \answerNo{} means that the paper has limitations, but those are not discussed in the paper.
                  \item The authors are encouraged to create a separate ``Limitations'' section in their paper.
                  \item The paper should point out any strong assumptions and how robust the results are to violations of these assumptions (e.g., independence assumptions, noiseless settings, model well-specification, asymptotic approximations only holding locally). The authors should reflect on how these assumptions might be violated in practice and what the implications would be.
                  \item The authors should reflect on the scope of the claims made, e.g., if the approach was only tested on a few datasets or with a few runs. In general, empirical results often depend on implicit assumptions, which should be articulated.
                  \item The authors should reflect on the factors that influence the performance of the approach. For example, a facial recognition algorithm may perform poorly when image resolution is low or images are taken in low lighting. Or a speech-to-text system might not be used reliably to provide closed captions for online lectures because it fails to handle technical jargon.
                  \item The authors should discuss the computational efficiency of the proposed algorithms and how they scale with dataset size.
                  \item If applicable, the authors should discuss possible limitations of their approach to address problems of privacy and fairness.
                  \item While the authors might fear that complete honesty about limitations might be used by reviewers as grounds for rejection, a worse outcome might be that reviewers discover limitations that aren't acknowledged in the paper. The authors should use their best judgment and recognize that individual actions in favor of transparency play an important role in developing norms that preserve the integrity of the community. Reviewers will be specifically instructed to not penalize honesty concerning limitations.
            \end{itemize}

      \item {\bf Theory assumptions and proofs}
      \item[] Question: For each theoretical result, does the paper provide the full set of assumptions and a complete (and correct) proof?
      \item[] Answer: \answerYes{}
      \item[] Justification: We provide a proof or a detailed proof sketch for each mathematical claim in the paper, with the full proofs of \Cref{lem:upperBound:timeToExtremes} and \Cref{thm:moeadQuadraticLowerBound} given in the appendix.
            \Cref{thm:moeadOmmQuadraticRuntime} and \Cref{cor:moeadOmmNoCommunicationSuperQuadraticRuntime} follow directly from other results, which we state in sufficient detail, respectively at the end of \Cref{sec:runtimeAnalysis} (just before \Cref{sec:upperBound}) and of \Cref{sec:trivialCommunicationRadius}.
      \item[] Guidelines:
            \begin{itemize}
                  \item The answer \answerNA{} means that the paper does not include theoretical results.
                  \item All the theorems, formulas, and proofs in the paper should be numbered and cross-referenced.
                  \item All assumptions should be clearly stated or referenced in the statement of any theorems.
                  \item The proofs can either appear in the main paper or the supplemental material, but if they appear in the supplemental material, the authors are encouraged to provide a short proof sketch to provide intuition.
                  \item Inversely, any informal proof provided in the core of the paper should be complemented by formal proofs provided in appendix or supplemental material.
                  \item Theorems and Lemmas that the proof relies upon should be properly referenced.
            \end{itemize}

      \item {\bf Experimental result reproducibility}
      \item[] Question: Does the paper fully disclose all the information needed to reproduce the main experimental results of the paper to the extent that it affects the main claims and/or conclusions of the paper (regardless of whether the code and data are provided or not)?
      \item[] Answer: \answerNA{}
      \item[] Justification: This paper does not contain any experiments.
      \item[] Guidelines:
            \begin{itemize}
                  \item The answer \answerNA{} means that the paper does not include experiments.
                  \item If the paper includes experiments, a \answerNo{} answer to this question will not be perceived well by the reviewers: Making the paper reproducible is important, regardless of whether the code and data are provided or not.
                  \item If the contribution is a dataset and\slash or model, the authors should describe the steps taken to make their results reproducible or verifiable.
                  \item Depending on the contribution, reproducibility can be accomplished in various ways. For example, if the contribution is a novel architecture, describing the architecture fully might suffice, or if the contribution is a specific model and empirical evaluation, it may be necessary to either make it possible for others to replicate the model with the same dataset, or provide access to the model. In general. releasing code and data is often one good way to accomplish this, but reproducibility can also be provided via detailed instructions for how to replicate the results, access to a hosted model (e.g., in the case of a large language model), releasing of a model checkpoint, or other means that are appropriate to the research performed.
                  \item While NeurIPS does not require releasing code, the conference does require all submissions to provide some reasonable avenue for reproducibility, which may depend on the nature of the contribution. For example
                        \begin{enumerate}
                              \item If the contribution is primarily a new algorithm, the paper should make it clear how to reproduce that algorithm.
                              \item If the contribution is primarily a new model architecture, the paper should describe the architecture clearly and fully.
                              \item If the contribution is a new model (e.g., a large language model), then there should either be a way to access this model for reproducing the results or a way to reproduce the model (e.g., with an open-source dataset or instructions for how to construct the dataset).
                              \item We recognize that reproducibility may be tricky in some cases, in which case authors are welcome to describe the particular way they provide for reproducibility. In the case of closed-source models, it may be that access to the model is limited in some way (e.g., to registered users), but it should be possible for other researchers to have some path to reproducing or verifying the results.
                        \end{enumerate}
            \end{itemize}

      \item {\bf Open access to data and code}
      \item[] Question: Does the paper provide open access to the data and code, with sufficient instructions to faithfully reproduce the main experimental results, as described in supplemental material?
      \item[] Answer: \answerNA{}
      \item[] Justification: This paper does not contain any experiments.
      \item[] Guidelines:
            \begin{itemize}
                  \item The answer \answerNA{} means that paper does not include experiments requiring code.
                  \item Please see the NeurIPS code and data submission guidelines (\url{https://neurips.cc/public/guides/CodeSubmissionPolicy}) for more details.
                  \item While we encourage the release of code and data, we understand that this might not be possible, so \answerNo{} is an acceptable answer. Papers cannot be rejected simply for not including code, unless this is central to the contribution (e.g., for a new open-source benchmark).
                  \item The instructions should contain the exact command and environment needed to run to reproduce the results. See the NeurIPS code and data submission guidelines (\url{https://neurips.cc/public/guides/CodeSubmissionPolicy}) for more details.
                  \item The authors should provide instructions on data access and preparation, including how to access the raw data, preprocessed data, intermediate data, and generated data, etc.
                  \item The authors should provide scripts to reproduce all experimental results for the new proposed method and baselines. If only a subset of experiments are reproducible, they should state which ones are omitted from the script and why.
                  \item At submission time, to preserve anonymity, the authors should release anonymized versions (if applicable).
                  \item Providing as much information as possible in supplemental material (appended to the paper) is recommended, but including URLs to data and code is permitted.
            \end{itemize}

      \item {\bf Experimental setting/details}
      \item[] Question: Does the paper specify all the training and test details (e.g., data splits, hyperparameters, how they were chosen, type of optimizer) necessary to understand the results?
      \item[] Answer: \answerNA{}
      \item[] Justification: This paper does not contain any experiments.
      \item[] Guidelines:
            \begin{itemize}
                  \item The answer \answerNA{} means that the paper does not include experiments.
                  \item The experimental setting should be presented in the core of the paper to a level of detail that is necessary to appreciate the results and make sense of them.
                  \item The full details can be provided either with the code, in appendix, or as supplemental material.
            \end{itemize}

      \item {\bf Experiment statistical significance}
      \item[] Question: Does the paper report error bars suitably and correctly defined or other appropriate information about the statistical significance of the experiments?
      \item[] Answer: \answerNA{}
      \item[] Justification: This paper does not contain any experiments.
      \item[] Guidelines:
            \begin{itemize}
                  \item The answer \answerNA{} means that the paper does not include experiments.
                  \item The authors should answer \answerYes{} if the results are accompanied by error bars, confidence intervals, or statistical significance tests, at least for the experiments that support the main claims of the paper.
                  \item The factors of variability that the error bars are capturing should be clearly stated (for example, train/test split, initialization, random drawing of some parameter, or overall run with given experimental conditions).
                  \item The method for calculating the error bars should be explained (closed form formula, call to a library function, bootstrap, etc.)
                  \item The assumptions made should be given (e.g., Normally distributed errors).
                  \item It should be clear whether the error bar is the standard deviation or the standard error of the mean.
                  \item It is OK to report 1-sigma error bars, but one should state it. The authors should preferably report a 2-sigma error bar than state that they have a 96\% CI, if the hypothesis of Normality of errors is not verified.
                  \item For asymmetric distributions, the authors should be careful not to show in tables or figures symmetric error bars that would yield results that are out of range (e.g., negative error rates).
                  \item If error bars are reported in tables or plots, the authors should explain in the text how they were calculated and reference the corresponding figures or tables in the text.
            \end{itemize}

      \item {\bf Experiments compute resources}
      \item[] Question: For each experiment, does the paper provide sufficient information on the computer resources (type of compute workers, memory, time of execution) needed to reproduce the experiments?
      \item[] Answer: \answerNA{}
      \item[] Justification: This paper does not contain any experiments.
      \item[] Guidelines:
            \begin{itemize}
                  \item The answer \answerNA{} means that the paper does not include experiments.
                  \item The paper should indicate the type of compute workers CPU or GPU, internal cluster, or cloud provider, including relevant memory and storage.
                  \item The paper should provide the amount of compute required for each of the individual experimental runs as well as estimate the total compute.
                  \item The paper should disclose whether the full research project required more compute than the experiments reported in the paper (e.g., preliminary or failed experiments that didn't make it into the paper).
            \end{itemize}

      \item {\bf Code of ethics}
      \item[] Question: Does the research conducted in the paper conform, in every respect, with the NeurIPS Code of Ethics \url{https://neurips.cc/public/EthicsGuidelines}?
      \item[] Answer: \answerYes{}
      \item[] Justification: We carefully reviewed the NeurIPS Code of Ethics and concluded that none of our results or assumptions violate the code.
      \item[] Guidelines:
            \begin{itemize}
                  \item The answer \answerNA{} means that the authors have not reviewed the NeurIPS Code of Ethics.
                  \item If the authors answer \answerNo, they should explain the special circumstances that require a deviation from the Code of Ethics.
                  \item The authors should make sure to preserve anonymity (e.g., if there is a special consideration due to laws or regulations in their jurisdiction).
            \end{itemize}

      \item {\bf Broader impacts}
      \item[] Question: Does the paper discuss both potential positive societal impacts and negative societal impacts of the work performed?
      \item[] Answer: \answerNA{}
      \item[] Justification: Our results provide theoretical guarantees for an existing algorithm.
            Hence, the algorithm itself, and thus its potential impact, is as potent as before, only our understanding of the algorithm improved.
      \item[] Guidelines:
            \begin{itemize}
                  \item The answer \answerNA{} means that there is no societal impact of the work performed.
                  \item If the authors answer \answerNA{} or \answerNo, they should explain why their work has no societal impact or why the paper does not address societal impact.
                  \item Examples of negative societal impacts include potential malicious or unintended uses (e.g., disinformation, generating fake profiles, surveillance), fairness considerations (e.g., deployment of technologies that could make decisions that unfairly impact specific groups), privacy considerations, and security considerations.
                  \item The conference expects that many papers will be foundational research and not tied to particular applications, let alone deployments. However, if there is a direct path to any negative applications, the authors should point it out. For example, it is legitimate to point out that an improvement in the quality of generative models could be used to generate Deepfakes for disinformation. On the other hand, it is not needed to point out that a generic algorithm for optimizing neural networks could enable people to train models that generate Deepfakes faster.
                  \item The authors should consider possible harms that could arise when the technology is being used as intended and functioning correctly, harms that could arise when the technology is being used as intended but gives incorrect results, and harms following from (intentional or unintentional) misuse of the technology.
                  \item If there are negative societal impacts, the authors could also discuss possible mitigation strategies (e.g., gated release of models, providing defenses in addition to attacks, mechanisms for monitoring misuse, mechanisms to monitor how a system learns from feedback over time, improving the efficiency and accessibility of ML).
            \end{itemize}

      \item {\bf Safeguards}
      \item[] Question: Does the paper describe safeguards that have been put in place for responsible release of data or models that have a high risk for misuse (e.g., pre-trained language models, image generators, or scraped datasets)?
      \item[] Answer: \answerNA{}
      \item[] Justification: This paper does not deal with data or models.
      \item[] Guidelines:
            \begin{itemize}
                  \item The answer \answerNA{} means that the paper poses no such risks.
                  \item Released models that have a high risk for misuse or dual-use should be released with necessary safeguards to allow for controlled use of the model, for example by requiring that users adhere to usage guidelines or restrictions to access the model or implementing safety filters.
                  \item Datasets that have been scraped from the Internet could pose safety risks. The authors should describe how they avoided releasing unsafe images.
                  \item We recognize that providing effective safeguards is challenging, and many papers do not require this, but we encourage authors to take this into account and make a best faith effort.
            \end{itemize}

      \item {\bf Licenses for existing assets}
      \item[] Question: Are the creators or original owners of assets (e.g., code, data, models), used in the paper, properly credited and are the license and terms of use explicitly mentioned and properly respected?
      \item[] Answer: \answerNA{}
      \item[] Justification: This paper does not use existing assets.
      \item[] Guidelines:
            \begin{itemize}
                  \item The answer \answerNA{} means that the paper does not use existing assets.
                  \item The authors should cite the original paper that produced the code package or dataset.
                  \item The authors should state which version of the asset is used and, if possible, include a URL.
                  \item The name of the license (e.g., CC-BY 4.0) should be included for each asset.
                  \item For scraped data from a particular source (e.g., website), the copyright and terms of service of that source should be provided.
                  \item If assets are released, the license, copyright information, and terms of use in the package should be provided. For popular datasets, \url{paperswithcode.com/datasets} has curated licenses for some datasets. Their licensing guide can help determine the license of a dataset.
                  \item For existing datasets that are re-packaged, both the original license and the license of the derived asset (if it has changed) should be provided.
                  \item If this information is not available online, the authors are encouraged to reach out to the asset's creators.
            \end{itemize}

      \item {\bf New assets}
      \item[] Question: Are new assets introduced in the paper well documented and is the documentation provided alongside the assets?
      \item[] Answer: \answerNA{}
      \item[] Justification: This paper does release new assets.
      \item[] Guidelines:
            \begin{itemize}
                  \item The answer \answerNA{} means that the paper does not release new assets.
                  \item Researchers should communicate the details of the dataset\slash code\slash model as part of their submissions via structured templates. This includes details about training, license, limitations, etc.
                  \item The paper should discuss whether and how consent was obtained from people whose asset is used.
                  \item At submission time, remember to anonymize your assets (if applicable). You can either create an anonymized URL or include an anonymized zip file.
            \end{itemize}

      \item {\bf Crowdsourcing and research with human subjects}
      \item[] Question: For crowdsourcing experiments and research with human subjects, does the paper include the full text of instructions given to participants and screenshots, if applicable, as well as details about compensation (if any)?
      \item[] Answer: \answerNA{}
      \item[] Justification: This paper neither involves crowdsourcing nor research with human subjects.
      \item[] Guidelines:
            \begin{itemize}
                  \item The answer \answerNA{} means that the paper does not involve crowdsourcing nor research with human subjects.
                  \item Including this information in the supplemental material is fine, but if the main contribution of the paper involves human subjects, then as much detail as possible should be included in the main paper.
                  \item According to the NeurIPS Code of Ethics, workers involved in data collection, curation, or other labor should be paid at least the minimum wage in the country of the data collector.
            \end{itemize}

      \item {\bf Institutional review board (IRB) approvals or equivalent for research with human subjects}
      \item[] Question: Does the paper describe potential risks incurred by study participants, whether such risks were disclosed to the subjects, and whether Institutional Review Board (IRB) approvals (or an equivalent approval/review based on the requirements of your country or institution) were obtained?
      \item[] Answer: \answerNA{}
      \item[] Justification: This paper neither involves crowdsourcing nor research with human subjects.
      \item[] Guidelines:
            \begin{itemize}
                  \item The answer \answerNA{} means that the paper does not involve crowdsourcing nor research with human subjects.
                  \item Depending on the country in which research is conducted, IRB approval (or equivalent) may be required for any human subjects research. If you obtained IRB approval, you should clearly state this in the paper.
                  \item We recognize that the procedures for this may vary significantly between institutions and locations, and we expect authors to adhere to the NeurIPS Code of Ethics and the guidelines for their institution.
                  \item For initial submissions, do not include any information that would break anonymity (if applicable), such as the institution conducting the review.
            \end{itemize}

      \item {\bf Declaration of LLM usage}
      \item[] Question: Does the paper describe the usage of LLMs if it is an important, original, or non-standard component of the core methods in this research? Note that if the LLM is used only for writing, editing, or formatting purposes and does \emph{not} impact the core methodology, scientific rigor, or originality of the research, declaration is not required.
      \item[] Answer: \answerNA{}
      \item[] Justification: The research conducted in this paper does not involve LLMs.
      \item[] Guidelines:
            \begin{itemize}
                  \item The answer \answerNA{} means that the core method development in this research does not involve LLMs as any important, original, or non-standard components.
                  \item Please refer to our LLM policy in the NeurIPS handbook for what should or should not be described.
            \end{itemize}

\end{enumerate}